\definecolor{Gray}{gray}{0.85}
\newtheorem{theorem}{Theorem}[section]
\newtheorem{lemma}[theorem]{Lemma}
\newtheorem{definition}[theorem]{Definition}
\newtheorem{proposition}[theorem]{Proposition}
\newtheorem{assumptions}[theorem]{Assumption}
\newenvironment{proof}{\par\noindent{\bf Proof:\ }}{\hfill$\Box$\\[2mm]}
\newif\ifpaper
\def\RR{\mathbb{R}}
\def\>{\rangle}
\def\vec{\operatorname{\textit{vec}}}
\def\Set#1{\left\{ #1 \right\}}
\def\Bigbar#1{\mathrel{\left|\vphantom{#1}\right.}}%\n@space}}
\def\Setbar#1#2{\Set{#1 \Bigbar{#1 #2} #2}}
\newcommand{\inner}[1]{\left\langle#1\right\rangle}
\newcommand{\norm}[1]{\left\|#1\right\|}
\def\bydef{\mathrel{\mathop:}=}
\def\det{\mathop{\rm det}\nolimits}
\newcommand{\Id}{\mathbb{I}}
\def\argmax{\mathop{\rm arg\,max}\limits}%    a math operator.
\def\min{\mathop{\rm min}\nolimits}
\def\max{\mathop{\rm max}\nolimits}
\def\ie{\textit{i.e. }}
\def\eg{\textit{e.g. }}
\title{On the loss landscape of a class of deep\\neural networks with no bad local valleys}
\author{Quynh Nguyen \\
Saarland University, Germany \\
\And
Mahesh Chandra Mukkamala \\
Saarland University, Germany \\
\And
Matthias Hein \\
University of T{\"u}bingen, Germany
}
\begin{document}
\maketitle

\begin{abstract}
We identify a class of over-parameterized deep neural networks with standard activation functions
and cross-entropy loss which provably have no bad local valley,
in the sense that from any point in parameter space there exists a continuous path on  
which the cross-entropy loss is non-increasing and gets arbitrarily close to zero. This implies that these networks have
no sub-optimal strict local minima. 
\end{abstract}

\section{Introduction}
%Two important aspects of deep learning are optimization versus generalization.
    It has been empirically observed in deep learning \citep{Dauphin16, Goodfellow15} 
    that the training problem of over-parameterized\footnote{These are the networks which have more parameters than necessary to fit the training data} 
    deep CNNs \citep{CunEtAl1990,KriSutHin2012} 
    does not seem to have a problem with bad local minima.
    In many cases, local search algorithms like stochastic gradient descent (SGD) 
    frequently converge to a solution with zero training error 
    %even though they are originally designed for convex optimization 
    even though the training objective is known to be non-convex 
    and potentially has many distinct local minima \citep{Auer96,SafranShamir2018}.
%     even for simple models like deep linear networks \citep{Kawaguchi16}.
    This indicates that the problem of training  practical over-parameterized neural networks 
    is still far from the worst-case scenario where the problem is known to be NP-hard \citep{Blum1989,Sim2002,LivShaSha2014,Shwartz2017}.
    A possible hypothesis is that the loss landscape of these networks is``well-behaved''
    so that it becomes amenable to local search algorithms like SGD and its variants.
    As not all neural networks have a well-behaved loss landscape,
    it is interesting to identify sufficient conditions on their architecture so that this is guaranteed.
    In this paper our motivation is to come up with such a class of networks in a practically relevant setting, that is 
    we study multi-class problems with the usual empirical cross-entropy loss and deep (convolutional) networks and almost no assumptions
    on the training data, in particular no distributional assumptions. Thus our results directly apply to the networks which we use in 
    the experiments.
%    This is the point of view that we would like to adopt in this paper
%    to overcome the NP-Hardness and non-convexity of the problem.
%    By trying to make our sufficient conditions as close as possible to realistic networks,
%    we hope to understand better the current optimization landscape in deep learning.
    \begin{figure}[ht]
	\vspace{-10pt}
	\centering
	\includegraphics[height=4cm]{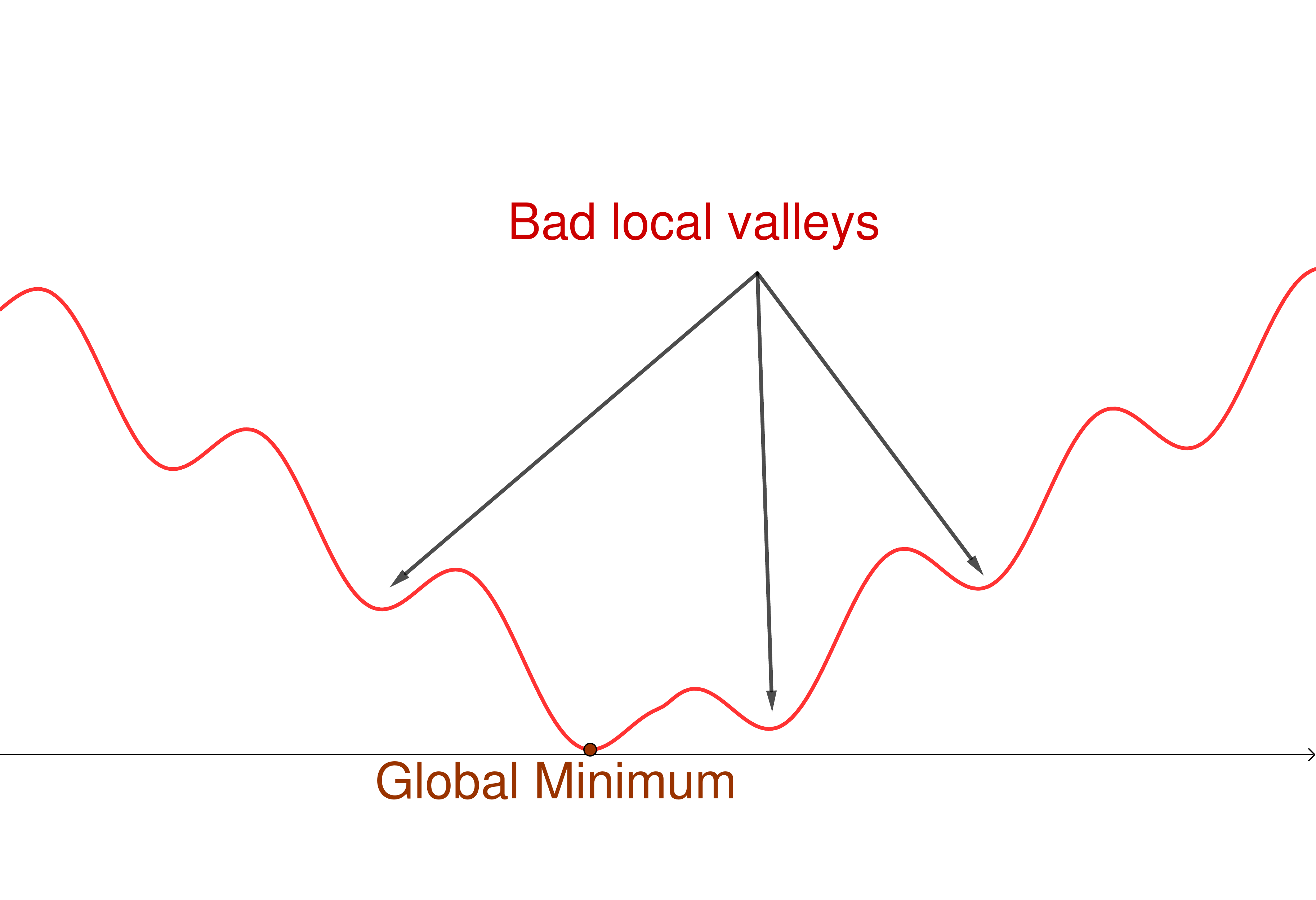}
	\includegraphics[height=4cm]{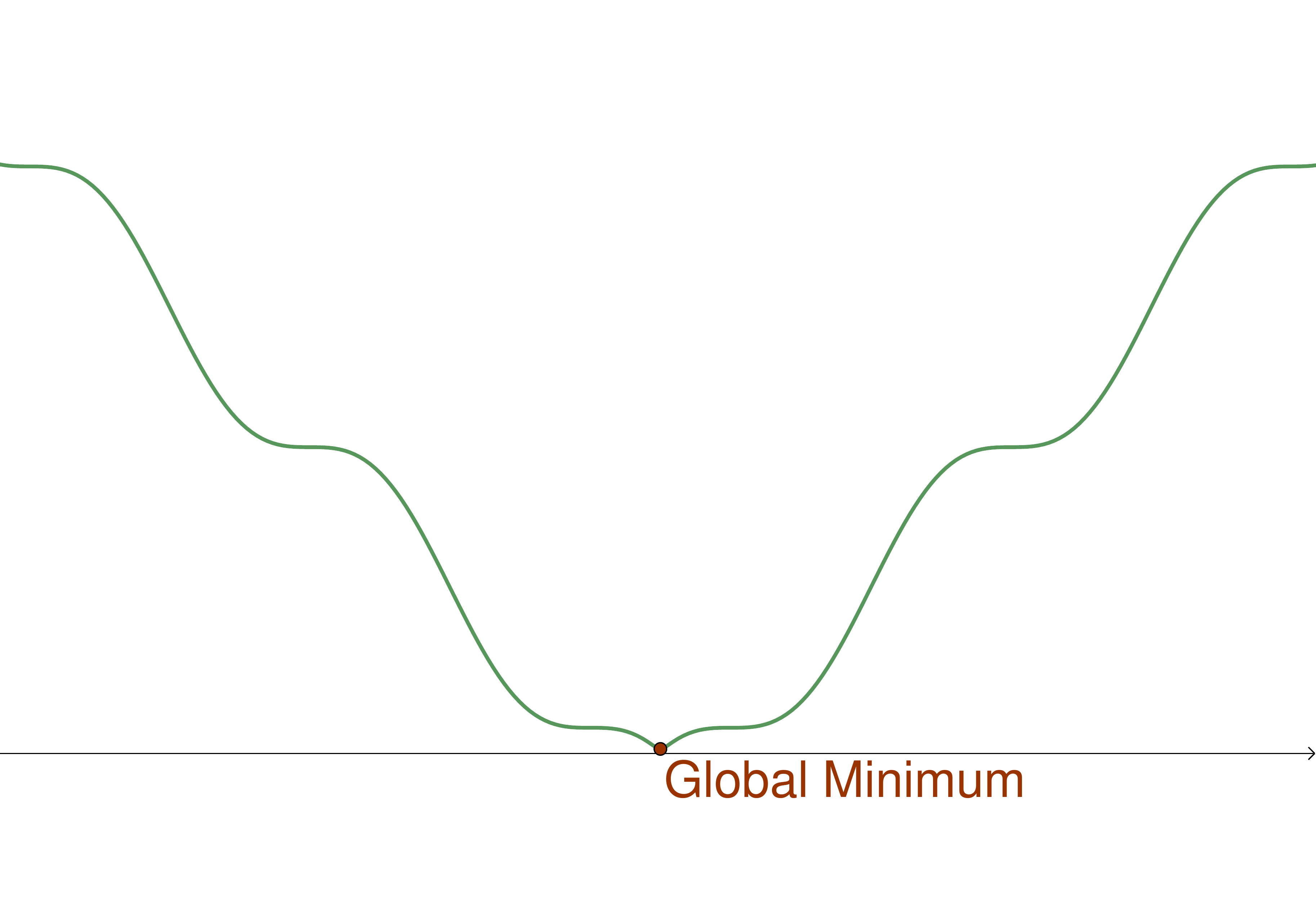}
	\caption{An example loss landscape with bad local valleys (left) and without bad local valley (right).}
	\label{fig:local_valley}
    \end{figure}
    \paragraph{Our contributions.}
    We identify a family of deep networks with skip connections to the output layer whose loss landscape has no bad local valleys (see Figure \ref{fig:local_valley} for an illustration).
%     While this class of networks is over-parameterized, we show that modifications of existing networks satisfying our conditions, enjoy good empirical 
%     generalization performance on MNIST and CIFAR10 datasets and even improve in many case over the original networks.
    Our setting is for the empirical loss and there are no distributional assumptions on the training data. Moreover, we study directly the standard 
    cross-entropy loss for multi-class problems. There are little assumptions on the network structure which can be arbitrarily deep and can have convolutional
    layers (weight sharing) and skip-connections between hidden layers.
    %and there are no assumptions on the width of each individual hidden layer 
    %nor any simplifying assumption on the architecture.
    %In particular, every neuron in the network is allowed to receive inputs from an arbitrary subset of neurons 
    %which could lie on multiple hidden layers below it. 
    %At a high level, the whole network architecture up to the last hidden layer can be seen as a general directed acyclic graph.
    %Our conditions also cover weight sharing and sparsity structure like CNNs
    %with potentially many skip-connections as used in Inceptions \citep{InceptionV3} and DenseNets \citep{Huang2017}.
    From a practical perspective, one can generate an architecture which fulfills our conditions
    by taking an existing CNN architecture
    and then adding skip-connections from a random subset of $N$ neurons ($N$ is the number of training samples),
    possibly from multiple hidden layers, to the output layer (see Figure \ref{fig:network} for an illustration).
    For these networks we show that there always exists a continuous path from any point in parameter space
    on which the loss is non-increasing and gets arbitrarily close to zero.
    We note that this implies the loss landscape has no strict local minima, but theoretically non-strict local minima can still exist.
    Beside that, we show that the loss has also no local maxima.
%     In the appendix, we show similar results for the standard squared loss, 
%     which has the advantage that the global minimizer exists compared to the additional
%     complication of the cross-entropy loss where the global minimizer need not exist. 
%     The missing proofs can be found in the appendix.

    Beside the theoretical analysis, we show in experiments 
    that despite achieving zero training error, the aforementioned class of neural networks 
    generalize well in practice when trained with SGD whereas an alternative training
    procedure guaranteed to achieve zero training error has significantly worse generalization performance and is overfitting.
    Thus we think that the presented class of neural networks offer an interesting test bed for future work to 
    study the implicit bias/regularization of SGD.

\section{Description of network architecture}\label{sec:net}
We consider a family of deep neural networks which have $d$ input units, $H$ hidden units, $m$ output units 
and satisfy the following conditions:
\begin{enumerate}
    \item Every hidden unit of the first layer can be connected to an arbitrary subset of input units.
    \item Every hidden unit at higher layers can take as input an arbitrary subset of hidden units 
    from (multiple) lower hidden layers.
    \item Any subgroup of hidden units lying on the same layer can have non-shared or shared weights, in the later case
    their number of incoming units have to be equal. 
    \item There exist $N$ hidden units which are connected to the output nodes with independent weights 
    ($N$ denotes the number of training samples).
    \item The output of every hidden unit $j$ in the network, denoted as $f_j: \RR^d\to\RR$, is given as
    \begin{align*}
	f_j(x) = \sigma_j\Big(b_j + \sum_{k: k\to j} f_k(x) u_{k\to j} \Big) 
    \end{align*}
    where $x\in\RR^d$ is an input vector of the network, $\sigma_j:\RR\to\RR$ is the activation function of unit $j$, 
    $b_j\in\RR$ is the bias of unit $j$, and $u_{k\to j}\in\RR$ the weight from unit $k$ to unit $j$.
\end{enumerate}
This definition covers a class of deep fully connected and convolutional neural networks
with an additional condition on the number of connections to the output layer.
In particular, while conventional architectures have just connections from the last hidden layer to the output,
we require in our setting that there must exist at least $N$ neurons, ``regardless'' of their hidden layer, 
that are connected to the output layer.
Essentially, this means that if the last hidden layer of a traditional network has just $L<N$ neurons 
then one can add  connections from $N-L$ neurons in the hidden layers below it to the output layer
so that the network fulfills our conditions.

Similar skip-connections have been used in DenseNet \citep{Huang2017}
which are different from identity skip-connections as used in ResNets \citep{ResNet}.
In Figure \ref{fig:network} we illustrate a network with and without skip connections to the output layer 
which is analyzed in this paper.
\begin{figure}[t]\centering
\includegraphics[height=4.35cm]{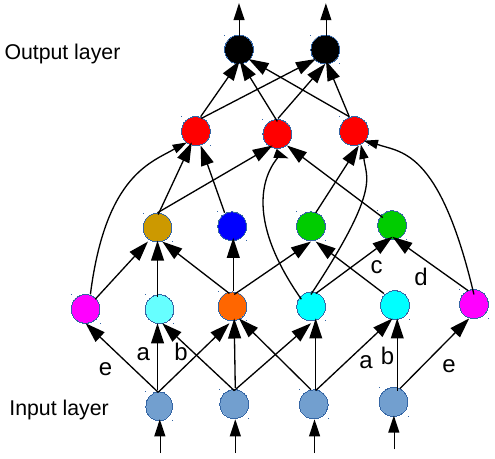}\hspace{+1cm}\includegraphics[height=5.1cm]{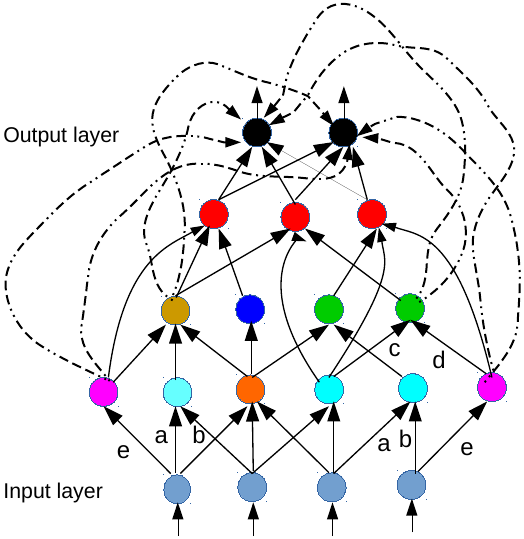}
\caption{
\textbf{Left}: An example neural network represented as directed acyclic graph.
\textbf{Right}: The same network with skip connections added from a subset of hidden neurons to the output layer.
All neurons with the same color can have shared or non-shared weights.
}
\label{fig:network}
\end{figure}
We note that several architectures like DenseNets \cite{Huang2017} already have skip-connections  
between hidden layers in their original architecture, 
whereas our special skip-connections go from hidden layers directly to the output layer.
As our framework allow both kinds to exist in the same network (see Figure \ref{fig:network} for an example),
we would like to separate them from each other by making the convention that 
in the following skip-connections, if not stated otherwise, always refer to ones which connect hidden neurons to output neurons.

We denote by $d$ the dimension of the input and index all neurons in the network from the input layer to the output layer as
$1,2,\ldots,d,d+1,\ldots,d+H,d+H+1,\ldots,d+H+m$ which correspond to $d$ input units, $H$ hidden units and $m$ output units respectively.
As we only allow directed arcs from lower layers to upper layers, it follows that $k<j$ for every $k\to j.$
Let $N$ be the number of training samples.
Suppose that there are $M$ hidden neurons which are directly connected to the output with independent weights
where it holds $N\leq M\leq H.$
%Note that our definition so far covers the case where all hidden neurons are connected to the output units \ie $M=H$, 
%that can simplify our notations a bit.
%But we decided to present a more general result below because we have certain conditions on the activation functions of those neurons
%connecting to the output layer, which turn out to be satisfied without any restriction for sigmoid in practice .
%Thus it could be problematic to train a very deep network
%if all hidden neurons in the network have to be sigmoid.
%Meanwhile the above general condition allows us to use sigmoid only at those neurons which are connected to the output layer,
%and use for other neurons potentially better activations like softplus.
%A typical example is to choose a subset of hidden layers with sigmoid activation and connect them to the output layer,
%whereas for all the other layers (without skip-connection) one uses softplus.
%Depending on which subset of layers (bottom or top) are connected to the output, 
%we think that this might have different influences on generalization performance.
Let $\Set{p_1,\ldots,p_M}$ with $p_j\in\Set{d+1,\ldots,d+H}$ be the set of hidden units which are directly connected to the output units.
Let $\mathrm{in}(j)$ be the set of incoming nodes to unit $j$ and $u_j=[u_{k\to j}]_{k\in\mathrm{in}(j)}$ the weight vector of the $j$-th unit.
Let $U=(u_{d+1},\ldots,u_{d+H},b_{d+1},\ldots,b_{d+H})$ denote the set of all weights and biases of all hidden units in the network.
Let $V\in\RR^{M\times m}$ be the weight matrix which connects the $M$ hidden neurons to the $m$ output units of the network.
An important quantity in the following is the matrix $\Psi\in\RR^{N\times M}$ defined as
\begin{align}\label{def:psi}
	\Psi = 
	\begin{bmatrix}
	    f_{p_1}(x_1) & \ldots & f_{p_M}(x_1) \\
	    \vdots & & \vdots \\
	    f_{p_1}(x_N) & \ldots & f_{p_M}(x_N) \\
	\end{bmatrix}
\end{align}
As $\Psi$ depends on $U$, we write $\Psi_U$ or $\Psi(U)$ as a function of $U$.
Let $G\in\RR^{N\times m}$ be the output of the network for all training samples. 
In particular, $G_{ij}$ is the value of the $j$-th output neuron for training sample $x_i$.
It follows from our definition that
\begin{align*}
    G_{ij} = \inner{\Psi_{i:}, V_{:j}} = \sum_{k=1}^M f_{p_k}(x_i) V_{kj}, \quad \forall i\in[N],j\in[m]
\end{align*}
Let $(x_i,y_i)_{i=1}^N$ be the training set where $y_i$ denotes the target class for sample $x_i$. In the following
we analyze the commonly used cross-entropy loss given as
\begin{align}\label{eq:cross_entropy}
    \Phi(U,V) = \frac{1}{N}\sum_{i=1}^N - \log \Big(\frac{e^{G_{iy_i}}}{\sum_{k=1}^m e^{G_{ik}}}\Big)
\end{align}
We refer to Section \ref{sec:general_loss} in the appendix for extension of our results to general convex losses.
The cross-entropy loss is bounded from below by zero but this value is not attained.
In fact the global minimum of the cross-entropy loss need not exist e.g.
%First we would like to note that $\Phi$ is asymptotically bounded below by zero.
%Second a global minimizer need not exist for the cross-entropy loss $\Phi$. 
if a classifier achieves zero training error then by upscaling the function to infinity one can drive the loss arbitrarily close to zero.
%if the data is linearly separable for a linear classifier, 
%any solution which separates the training data (and thus achieves zero training error)
%can be up-scaled to ``infinity'' to drive the loss arbitrarily close to zero. 
Due to this property, we do not study the global minima of the cross-entropy loss but the question if and how one
can achieve zero training error. Moreover, we note that sufficiently small cross-entropy loss implies zero training error
as shown in the following lemma.
\begin{lemma}
If $\Phi(U,V)<\frac{\log(2)}{N}$, then the training error is zero.
\end{lemma}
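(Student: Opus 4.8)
The plan is to argue pointwise: since every summand in the cross-entropy loss is nonnegative, a small average forces each individual term to be small, which in turn forces the softmax probability assigned to the correct class to exceed $\frac12$, and that is enough to conclude correct classification.

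First I would observe that for each $i\in[N]$ the quantity $\frac{e^{G_{iy_i}}}{\sum_{k=1}^m e^{G_{ik}}}$ lies in $(0,1]$, so $-\log\big(\frac{e^{G_{iy_i}}}{\sum_k e^{G_{ik}}}\big)\ge 0$. Consequently each term of the sum defining $\Phi(U,V)$ is bounded above by $N\,\Phi(U,V)<\log(2)$. Exponentiating the inequality $-\log\big(\frac{e^{G_{iy_i}}}{\sum_k e^{G_{ik}}}\big)<\log(2)$ yields
\begin{align*}
    \frac{e^{G_{iy_i}}}{\sum_{k=1}^m e^{G_{ik}}} > \frac{1}{2}, \qquad \forall i\in[N].
\end{align*}

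Next I would turn this probability bound into a statement about the $\argmax$. Since the softmax outputs are positive and sum to $1$, having the $y_i$-th entry strictly greater than $\frac12$ implies that the sum of all remaining entries is strictly less than $\frac12$, hence every other entry is strictly smaller than the $y_i$-th one. Equivalently $G_{iy_i} > G_{ik}$ for all $k\neq y_i$, so $y_i$ is the unique maximizer of $k\mapsto G_{ik}$, i.e.\ the network classifies $x_i$ correctly. As this holds for every $i\in[N]$, the training error is $0$.

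There is no real obstacle here; the only point requiring a line of care is the passage from "correct-class probability $>\tfrac12$" to "correct class is the strict $\argmax$", which relies on the probabilities being nonnegative and summing to one. If the notion of training error in the paper counts ties as errors, the strictness obtained above already covers that case.
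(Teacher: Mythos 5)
Your proof is correct and follows essentially the same route as the paper's: bound each summand by the total (which is $N\Phi<\log 2$ by nonnegativity), deduce that the correct-class softmax probability exceeds $\tfrac12$, and conclude that $y_i$ is the strict argmax. The paper phrases the middle step as $\sum_{k\neq y_i}e^{G_{ik}-G_{iy_i}}<1$, which is just an algebraic rearrangement of your probability bound.
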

\begin{proof}
We note that if $\Phi(U,V)< \frac{\log(2)}{N}$, then it holds due to the positivity of the loss,
\[ \max_{i=1,\ldots,N} - \log \Big(\frac{e^{G_{iy_i}}}{\sum_{k=1}^m e^{G_{ik}}}\Big) \leq \sum_{i=1}^N - \log \Big(\frac{e^{G_{iy_i}}}{\sum_{k=1}^m e^{G_{ik}}}\Big)
< \log(2).\]
This implies that for all $i=1,\ldots,N$,
\[ \log\big(1 + \sum_{k\neq y_i} e^{G_{ik}-G_{iy_i}}\big) < \log(2) \quad \Longrightarrow \quad \sum_{k\neq y_i} e^{G_{ik}-G_{iy_i}} < 1.\]
In particular: $\max_{k\neq y_i} e^{G_{ik}-G_{iy_i}}< 1$ and thus $\max_{k \neq y_i} G_{ik}-G_{iy_i}<0$ for all $i=1,\ldots,N$ which implies the result.
\end{proof}
%Due to this reason, we study instead the question if and how one can obtain zero training error. 
%In particular, we want to establish a correspondence between small cross-entropy loss and small training error. 
%It follows that if $\Phi(U,V)\leq \frac{\log(2)}{N}$, then due to the positivity of each individual term in the loss,
%they will necessarily satisfy 
%$$0 < -\log \Big(\frac{e^{G_{iy_i}}}{\sum_{k=1}^m e^{G_{ik}}}\Big) < \log(2), \quad \forall\,i=1,\ldots,N$$
%which implies zero training error. 
%Note that the reverse is not true as the worst case is a constant classifier where the loss is $\frac{\log(m)}{N}$ but 
%depending on definition it can have maximal or zero error. 

\section{Main result}
The following conditions are required for the main result to hold.
% A neuron is called skip-neuron if it has skip-connections to the output units.
\begin{assumptions}\label{ass}
    \begin{enumerate}
	\item All activation functions $\Set{\sigma_{d+1}, \ldots,\sigma_{d+H}}$ are real analytic and strictly increasing
	\item Among $M$ neurons $\Set{p_1,\ldots,p_M}$ which are connected to the output units, there exist $N\leq M$ neurons,
	say w.l.o.g. $\Set{p_1,\ldots,p_N}$, such that one of the following conditions hold:
	\begin{itemize} 
	    \item For every $1\leq j\leq N: \sigma_{p_j}$ is bounded and $\lim_{t\to-\infty} \sigma_{p_j}(t)=0$
	    \item For every $1\leq j\leq N: \sigma_{p_j}$ is the softplus activation \eqref{eq:softplus}, 
	    and there exists a backward path from $p_j$ to the first hidden layer 
	    s.t. on this path there is no neuron 
	    which has skip-connections to the output or shared weights with other skip-connection neurons.
	\end{itemize}
        
        \item 
        The input patches of different training samples are distinct. In particular,
        let $n_1$ be the number of units in the first hidden layer 
        and denote by $S_i$ for $i \in [d+1,d+n_1]$ their input support, 
        then for all $r \neq s \in [N],$ 
        and $i \in [d+1,d+n_1]$, 
        it holds 
         $x_r |_{S_i} \neq x_s|_{S_i}.$
%        where we assume that $|S_i|=|S_j|$ for all $i,j \in [d+1,d+n_1]$.
    \end{enumerate}
\end{assumptions}
The first condition of Assumption \ref{ass} is satisfied for $\mathrm{softplus}$, $\mathrm{sigmoid}$, $\mathrm{tanh}$, etc,
whereas the second condition is fulfilled for $\mathrm{sigmoid}$ and $\mathrm{softplus}$. 
For $\mathrm{softplus}$ activation function (smooth approximation of ReLU),
\begin{align}\label{eq:softplus}
    \sigma_\gamma(t)=\frac{1}{\gamma}\log(1+e^{\gamma t}), \quad \textrm{ for some } \gamma>0,
\end{align}
we require an additional assumption on the network architecture. 
The third condition is always satisfied for fully connected networks if the training samples are distinct.
For CNNs, this condition means that the corresponding input patches across different training samples are distinct. 
This could be potentially violated if the first convolutional layer has very small receptive fields.
However, if this condition is violated for the given training set then after an arbitrarily small random perturbation of all 
training inputs it will be satisfied with probability 1.
Note that the $M$ neurons which are directly connected to the output units can lie on different hidden layers in the network. 
Also there is no condition on the width of every individual hidden layer as long as the total number of hidden neurons in the network
is larger than $N$ so that our condition $M\geq N$ is feasible.

Overall, we would like to stress that Assumption \ref{ass} covers a quite large class of interesting network architectures
but nevertheless allows us to show quite strong results on their empirical loss landscape.

The following key lemma shows that for almost all $U$, the matrix $\Psi(U)$ has full rank.
\begin{lemma}\label{lem:full_rank}
    Under Assumption \ref{ass}, the set of $U$ such that $\Psi(U)$ has not full rank $N$ has Lebesgue measure zero.
\end{lemma}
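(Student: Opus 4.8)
The plan is to reduce, via real-analyticity, to the construction of a single weight configuration for which a fixed $N\times N$ submatrix of $\Psi$ is invertible, and then to build that configuration by a saturation argument. For the reduction, let $\Psi'(U)\in\RR^{N\times N}$ be the submatrix of $\Psi(U)$ formed by the columns indexed by the $N$ neurons $p_1,\ldots,p_N$ singled out in the second part of Assumption \ref{ass}; if $\rank\Psi(U)<N$ then in particular $\det\Psi'(U)=0$, so it suffices to prove that $\{U:\det\Psi'(U)=0\}$ is a Lebesgue null set. Each entry $\Psi'_{ij}(U)=f_{p_j}(x_i)$ is obtained from the fixed vector $x_i$ by finitely many steps, each an affine map in the coordinates of $U$ followed by one of the activations $\sigma_{d+1},\ldots,\sigma_{d+H}$; since these activations are real analytic (first part of Assumption \ref{ass}), $f_{p_j}(x_i)$ is a real-analytic function of $U$ on all of $\RR^{\dim U}$, and hence so is the polynomial $\det\Psi'(U)$. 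A real-analytic function on a connected open set either vanishes identically or has a zero set of Lebesgue measure zero, so the lemma follows once we exhibit one $U^{*}$ with $\det\Psi'(U^{*})\neq 0$.

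To construct $U^{*}$ I would make $\Psi'$ converge, along a one-parameter family of weights, to a matrix that is manifestly invertible (say, lower triangular with nonzero diagonal after a permutation of rows). By the third part of Assumption \ref{ass}, for every first-layer unit the patches $x_1|_{S_i},\ldots,x_N|_{S_i}$ are pairwise distinct, so for a generic choice of its input weights the pre-activation of that unit takes $N$ distinct values on the training set; since every $\sigma_j$ is strictly increasing, hence injective, a generic choice of the weights of each subsequent layer preserves this ``separation'', so along a backward path from $p_j$ to the first hidden layer one can arrange that $f_{p_j}$ depends on a separating first-layer feature through a strictly increasing function. For $j=1,\ldots,N$ I would then use the remaining freedom — a large common scale $\lambda$ on that path and the bias of $p_j$ acting as a threshold placed between two consecutive training samples in the ordering induced by that feature — together with the assumed properties of $\sigma_{p_j}$ (bounded or softplus, strictly increasing, $\lim_{t\to-\infty}\sigma_{p_j}(t)=0$) to drive $f_{p_j}(x_i)\to 0$ for the samples below the threshold while keeping $f_{p_j}$ bounded away from $0$ on the others. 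Choosing the $N$ thresholds consistently with one fixed ordering of the training points makes $\Psi'(U_\lambda)$ converge, as $\lambda\to\infty$, to a triangular matrix with nonzero diagonal; by continuity of the determinant, $\det\Psi'(U_\lambda)\neq 0$ for all large $\lambda$, which gives the required $U^{*}$.

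The main obstacle is this construction, which is where the hypotheses really enter. First, the $p_j$ need not lie in the first hidden layer, so one must carry the separating first-layer feature forward through many layers and along several backward paths without the ``thresholding'' at $p_j$ disturbing the values $f_{p_k}$ of the other skip neurons: in the softplus case this is exactly what the path condition in Assumption \ref{ass} buys (no skip-connection neuron and no shared-weight neuron on the path), and for bounded $\sigma_{p_j}$ one instead uses that off-path contributions stay bounded and are dominated by the scale $\lambda$. Second, the limit matrix must be genuinely invertible, which forces the thresholds to refer to a single ordering of the training samples; the per-unit distinctness of input patches in the third part of Assumption \ref{ass} is exactly what allows this, since it lets one first-layer feature separate all $N$ points. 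Once a good $U^{*}$ is available, the real-analyticity argument of the first step upgrades it to ``almost every $U$''.
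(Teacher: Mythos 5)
Your overall strategy coincides with the paper's: reduce via real-analyticity of $\det\Psi'$ and Lemma \ref{lem:zeros_of_analytic} to exhibiting one configuration $U^*$ with $\det\Psi'(U^*)\neq 0$, and build $U^*$ by routing each skip neuron back to a separating first-layer feature and saturating it so that $\Psi'$ becomes triangular up to a row permutation. The reduction step is correct. The gap is in how you force the triangular structure. You require the $N$ thresholds to ``refer to a single ordering of the training samples'' and justify this by the fact that one first-layer feature separates all $N$ points. But the connectivity of the network is fixed: the skip neurons $p_1,\ldots,p_N$ may sit on different layers and may only admit backward paths to \emph{different} first-layer units, and the third condition of Assumption \ref{ass} only guarantees that \emph{each} first-layer unit separates the samples, not that the orderings they induce agree. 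In general they do not (take two skip neurons whose only reachable first-layer units have different input supports), so there is no single ordering of the samples with respect to which every column's threshold can be placed, and the limit of $\Psi'(U_\lambda)$ under your construction need not be triangular under any row permutation fixed in advance from one feature.

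The missing idea, which is exactly how the paper closes this hole, is to build the row permutation greedily across the heterogeneous per-column orderings: $\pi(j)=\argmax_{i\in\{1,\ldots,N\}\setminus\{\pi(1),\ldots,\pi(j-1)\}} f_{c(p_j)}(x_i)$, so that for each column $j$ the value at row $\pi(j)$ dominates the values at rows $\pi(i)$, $i>j$, \emph{with respect to that column's own feature}; placing column $j$'s threshold just below $f_{c(p_j)}(x_{\pi(j)})$ then suppresses precisely the sub-diagonal entries after reordering the rows by $\pi$. Two further points need care if you carry the construction out. First, in the bounded-activation case a single common scale $\lambda$ does not suffice: the gaps $f_{c(p_j)}(x_i)-f_{c(p_j)}(x_j)$ themselves depend on the scales of the lower layers, so the scales must be chosen layer by layer in a forward pass, and must be equal for neurons within one layer so as not to violate weight sharing. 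Second, only the sub-diagonal entries are driven to (at most) $\epsilon$; the remaining entries are merely bounded, so the matrix does not converge to a fixed triangular matrix and continuity of the determinant is not enough --- you need the quantitative Leibniz estimate comparing $\prod_{j=1}^N\sigma_{p_j}(\beta)$ against $N!\,C^{N-1}\epsilon$ (or, for softplus, the explicit decay bound in $\alpha$).
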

\begin{proof}(Proof sketch)
Due to space limitation, we can only present below a proof sketch. 
We refer the reader to the appendix for the detailed proof.
The proof consists of two main steps. 
First, we show that there exists $U$ s.t. the submatrix $\Psi_{1:N,1:N}$ has full rank.
By Assumption \ref{ass}, all activation functions are real analytic,
thus the determinant of $\Psi_{1:N,1:N}$ is a real analytic function of the network parameters which $\Psi$ depends on.
By the first result, this determinant function is not identically zero,
thus Lemma \ref{lem:zeros_of_analytic} shows that the set of $U$ for which $\Psi$ has not full rank has Lebesgue measure zero.

Now we sketch the proof for the first step, that is to find a $U$ s.t. $\Psi$ has full rank.
By Assumption 3.1.3, one can always choose the weight vectors of the first hidden layer so that 
every neuron at this layer has distinct values for different training samples.
For higher neurons, we set their initial weight vectors to be unit vectors with exactly one $1$ and $0$ elsewhere.
Note that the above construction of weights can be easily done so that
all the neurons from the same layer and with the same number of incoming units
can have shared/unshared weights according to our description of network architecture in Section \ref{sec:net}.
Let $c(j)$ be the neuron below $j$ s.t. $u_{c(j)\to j}=1$.
To find $U$, we are going to scale up each weight vector $u_j$ by a positive scalar $\alpha_j$.
The idea is to show that the determinant of $\Psi_{1:N,1:N}$ is non-zero for some positive value of $\Set{\alpha_j}$.
The biases can be chosen in such a way that the following holds for some $\beta\in\RR$,
    \begin{align}\label{_eq:hidden_neuron}
	\Psi_{ij} = f_{p_j}(x_i) =  \sigma_{p_j}\Big( \beta + \alpha_{p_j} \big(f_{c(p_j)}(x_i)-f_{c(p_j)}(x_j)\big) \Big) \quad\forall j\in[N] 
    \end{align}
where $f_{c(p_j)}(x)=\sigma_{c(p_j)}(\alpha_{c(p_j)}\sigma_{c(c(p_j))}(\ldots f_{q_j}(x) \ldots))$ 
with $q_j$ being the index of some neuron in the first hidden layer.
Note that by our construction the value of unit $q_j$ is distinct at different training samples,
and thus it follows from the strict monotonic property of activation functions from Assumption \ref{ass}
and the positivity of $\Set{\alpha_{c(p_j)},\ldots}$ that $f_{c(p_j)}(x_i)\neq f_{c(p_j)}(x_j)$ for every $i\neq j$.
Next, we show that the set of training samples can be re-ordered in such a way that 
it holds $f_{c(p_j)}(x_i)<f_{c(p_j)}(x_j)$ for every $i>j$.
Note that this re-ordering does not affect the rank of $\Psi$.
Now, the intuition is that if one let $\alpha_{p_j}$ go to infinity then $\Psi_{ij}$ converges to zero for $i>j$
because it holds for all activations from Assumption \ref{ass} that $\lim_{t\to-\infty}\sigma_{p_j}(t)=0.$
Thus the determinant of $\Psi_{1:1,1:N}$ converges to $\prod_{i=1}^N\sigma_{p_j}(\beta)$
which can be chosen to be non-zero by a predefined value of $\beta$, in which case $\Psi$ will have full rank.
The detailed proof basically will show how to choose the specific values of $\Set{\alpha_j}$ 
so that all the above criteria are met.
In particular, it is important to make sure that the weight vectors of two neurons $j,j'$ from the same layer
will be scaled by the same factor $\alpha_j=\alpha_{j'}$ as we want to maintain any potential weight-sharing conditions.
The choice of activation functions from the second condition of Assumption \ref{ass} 
basically determines how the values of $\alpha$ should be chosen.
\end{proof}

While we conjecture that the result of Lemma \ref{lem:full_rank} holds for softplus activation function 
without the additional condition as mentioned in Assumption \ref{ass}, 
the proof of this is considerably harder for such a general class of neural networks 
since one has to control the output of neurons with skip connection from different layers which depend on each other. However, please note that the condition is also not too restrictive
as it just might require more connections from lower layers to upper layers but it does not require that the network is wide.
Before presenting our main result, we first need a formal definition of bad local valleys.
\begin{definition}\label{def:bad_valley}
    The $\alpha$-sublevel set of $\Phi$ is defined as $L_{\alpha}=\Setbar{(U,V)}{\Phi(U,V) < \alpha}.$
    A local valley is defined as a connected component of some sublevel set $L_{\alpha}$.
    A bad local valley is a local valley on which the loss function $\Phi$ cannot be made ``arbitrarily small''. 
\end{definition}
Intuitively, a typical example of a bad local valley is a small neighborhood around a sub-optimal strict local minimum. 
We are now ready to state our main result.
\begin{theorem}\label{th:CE}
    The following holds under Assumption \ref{ass}:
    \begin{enumerate}
% 	\item $\Phi$ attains a solution with zero training error
	\item 
% 	For almost all $U$, there exists a $V$ s.t. the solution $(U,V)$ has zero training error
	There exist uncountably many solutions with zero training error.
	\item The loss landscape of $\Phi$ does not have any bad local valley.
  \item There exists no suboptimal strict local minimum.
	\item There exists no local maximum.
%         \item For any $\epsilon>0$ one can find a pair $(U^*,V^*)$ such that $\Phi(U^*,V^*)=\epsilon$ in polynomial time with probability 1
	%\item The number of connected components of the parameter space where zero training error is attained
	%is upper bounded by the number of connected components where $\Psi$ has full rank
    \end{enumerate}
\end{theorem}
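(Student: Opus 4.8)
The plan is to reduce all four claims to the single structural fact furnished by Lemma~\ref{lem:full_rank}: for Lebesgue‑almost every $U$ the matrix $\Psi_U\in\RR^{N\times M}$ has full row rank $N$, so that the linear map $V\mapsto \Psi_U V=G$ is onto $\RR^{N\times m}$, i.e. the network output on the whole training set can be prescribed arbitrarily. \emph{Item 1} then follows immediately: fix any $U$ with $\Psi_U$ of full rank (such $U$ form a co‑null, hence uncountable, set by Lemma~\ref{lem:full_rank}); for the target $G^{*}$ with $G^{*}_{i y_i}=1>0=G^{*}_{ik}$ $(k\neq y_i)$ and any scale $c>0$, solve $\Psi_U V=cG^{*}$, which is possible by surjectivity; the resulting network classifies every sample correctly, and distinct $c$ give distinct $(U,V)$, so there are uncountably many zero‑error solutions.

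\emph{Item 2.} A local valley is a connected component $\mathcal C$ of some $L_\alpha=\{\Phi<\alpha\}$. Since $\Phi$ is continuous, $\mathcal C$ is open and (for $\alpha>0$) nonempty, hence has positive Lebesgue measure, whereas by Lemma~\ref{lem:full_rank} together with Fubini the set $\{(U,V):\operatorname{rank}\Psi_U<N\}$ is null; therefore $\mathcal C$ contains a point $(\hat U,\hat V)$ with $\Psi_{\hat U}$ of full rank. On the slice $\{\hat U\}\times\RR^{M\times m}$ the map $V\mapsto\Phi(\hat U,V)$ is convex (cross‑entropy is convex in $G$ and $G=\Psi_{\hat U}V$ is linear in $V$), and surjectivity of $\Psi_{\hat U}$ lets us choose $V_\mu$ with output $G_{iy_i}=\mu$, $G_{ik}=0$ for $k\neq y_i$, giving $\Phi(\hat U,V_\mu)=\log\!\bigl(1+(m-1)e^{-\mu}\bigr)\to 0$. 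The convex set $\{V:\Phi(\hat U,V)<\alpha\}$ is connected and contains $\hat V$, hence lies entirely in $\mathcal C$, so $\inf_{\mathcal C}\Phi\le\inf_V\Phi(\hat U,V)=0$ and $\mathcal C$ is not bad. For the explicit non‑increasing path from an arbitrary $(U_0,V_0)$: if $\Psi_{U_0}$ is full rank, descend within the convex slice $\Phi(U_0,\cdot)$ — e.g. take $\gamma(t)$ to be the point nearest $V_0$ in the closed convex set $\{V:\Phi(U_0,V)\le(1-t)\Phi(U_0,V_0)\}$, which is continuous, keeps $\Phi(U_0,\gamma(t))=(1-t)\Phi(U_0,V_0)$, hence is non‑increasing with limit $0$; if $\Psi_{U_0}$ is rank‑deficient, first descend in the $V$‑slice toward $\phi_0:=\inf_V\Phi(U_0,\cdot)$, and if $\phi_0>0$ slide $U$ a short way into the full‑rank region (which is dense) without increasing $\Phi$ and then finish as before.

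\emph{Items 3 and 4.} Since cross‑entropy is strictly positive, every local minimum is suboptimal; if $(U^{*},V^{*})$ were a \emph{strict} local minimum, the non‑increasing path of Item 2 starting there reaches a strictly smaller value and is therefore non‑constant, producing points arbitrarily close to $(U^{*},V^{*})$ with loss $\le\Phi(U^{*},V^{*})$ — impossible at a strict local minimum. For Item 4, a local maximum of $\Phi$ is in particular a local maximum of the convex function $V\mapsto\Phi(U^{*},V)$; a convex function with a local maximum is constant on a neighborhood, and being real‑analytic in $V$ it is then constant on all of $\RR^{M\times m}$, which forces $\Psi_{U^{*}}=0$. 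But by Assumption~\ref{ass} the activations of $p_1,\dots,p_N$ are strictly positive (sigmoid or softplus), so $\Psi_{ij}=f_{p_j}(x_i)>0$ for $i,j\in[N]$ and $\Psi_{U^{*}}\neq 0$ — a contradiction, so there is no local maximum.

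\emph{Main obstacle.} The real content is that Lemma~\ref{lem:full_rank} only excludes a measure‑zero set, so "full rank is generic" must be upgraded to topological statements; the open‑set‑versus‑null‑set argument handles the sublevel‑set components cleanly, but the delicate point is the monotone path when the starting $U_0$ is rank‑deficient, since one cannot in general move $U_0$ into the full‑rank region along a straight segment without the loss fluctuating. I would handle this by first moving to a near‑minimizer of the convex $V$‑slice over $U_0$, then exploiting upper semicontinuity of $U\mapsto\inf_V\Phi(U,V)$ together with a continuous (Michael‑type) selection of $V$ to slide $U_0$ a short distance into the full‑rank set while keeping $\Phi$ within $\varepsilon$ of $\phi_0$, then descending genuinely in the new full‑rank slice, and finally letting $\varepsilon\downarrow 0$ along a concatenated sequence of such paths so that the infimum along the path is $0$.
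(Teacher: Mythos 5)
Your items 1, 2 and 4 are correct and essentially follow the paper's route. Item 1 and the core of item 2 (an open connected component of $L_\alpha$ has positive Lebesgue measure, hence meets the full-rank set of Lemma \ref{lem:full_rank}; then convexity of $V\mapsto\Phi(\hat U,V)$ and surjectivity of $V\mapsto\Psi_{\hat U}V$ drive the loss to $0$ inside that component) are the paper's argument almost verbatim. Your item 4 replaces the paper's explicit computation of $\nabla^2_{V_{:j}}\Phi$ by the equivalent observation that a convex, real-analytic function of $V$ with a local maximum must be globally constant, forcing $\Psi_{U^*}=0$, which contradicts the strict positivity of $f_{p_1},\dots,f_{p_N}$; this is fine. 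Note also that Definition \ref{def:bad_valley} is purely about connected components of sublevel sets, so the entire ``explicit non-increasing path'' construction in the second half of your item 2 and in your closing paragraph is not needed for any of the four claims.

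The genuine gap is in item 3, which you make depend on that path. When $\Psi_{U^*}$ has full rank the point is easy (the convex slice $\Phi(U^*,\cdot)$ has infimum $0$, never attained, so $V^*$ does not minimize the slice and the segment towards a better $V$ already produces arbitrarily close points of strictly smaller loss). The problematic case is a rank-deficient $U^*$ at which $V^*$ minimizes the convex slice: there you must move $U$, and your sketch only promises paths that stay within $\varepsilon$ of $\inf_V\Phi(U^*,\cdot)$ --- such a path may increase, hence is not non-increasing, and concatenating them as $\varepsilon\downarrow 0$ yields a path with infimum $0$ that need not ever return close to $(U^*,V^*)$ with loss $\le\Phi(U^*,V^*)$. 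So the contradiction with strict local minimality is not established in exactly the case that matters. The fix is to deduce item 3 from item 2 directly, as the paper does: choose $r>0$ with $\Phi>\Phi(U^*,V^*)$ on $B((U^*,V^*),r)\setminus\{(U^*,V^*)\}$, set $\alpha=\min_{\partial B}\Phi>\Phi(U^*,V^*)$, and note that the connected component of $L_{(\alpha+\Phi(U^*,V^*))/2}$ containing $(U^*,V^*)$ cannot meet the boundary sphere, hence is trapped inside the ball, where $\Phi\ge\Phi(U^*,V^*)>0$; that component is then a bad local valley, contradicting item 2, and no path is required.
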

%%%%%%%%%%%%%%%%%%%%%%%%%%%%%%%%%%%%%%%%%%%%%%%%%%%%%%%%%%%%%%%%%%%%%%%
%%%%%%%%  PROOF FOR CROSS-ENTROPY
%%%%%%%%%%%%%%%%%%%%%%%%%%%%%%%%%%%%%%%%%%%%%%%%%%%%%%%%%%%%%%%%%%%%%%%
% \ifpaper
\begin{proof}
\begin{enumerate}
%  \item  By Lemma \ref{lem:full_rank} we know that there exists a $U$ such that $\Psi(U)$ has full rank. %and Equation \eqref{eq:train_obj}
%         Thus the linear system $\Psi(U)V=Y$ has at least one solution and thus $\Phi$ yields .
	\item By Lemma \ref{lem:full_rank} the set of $U$ such that $\Psi(U)$ has not full rank $N$ has Lebesgue measure zero. 
	Given $U$ such that $\Psi$ has full rank, the linear system $\Psi(U)V=Y$ has for every possible target output matrix $Y\in\RR^{N\times m}$ at least one solution $V$.
	As this is possible for almost all $U$, there exist uncountably many solutions achieving zero training error.
	%Thus one can simply pre-select a desired output matrix $Y$ which leads to zero training error at the output layer, 
	%and then solve the above equation to obtain $V$.
	%Note that a possible choice for the target output matrix is the usual zero-one one-hot encoding.
	%As there are uncountably many $U$ for which $\Psi$ has full rank, there are uncountably many solutions with zero training error.
        %\item By Lemma \ref{lem:full_rank}, $\Psi$ has full row rank for almost all $U$ for which there always exists a corresponding $V$
	%s.t. $\Phi(U,V)=0$
	
	\item Let $C$ be a non-empty, connected component of some $\alpha$-sublevel set $L_{\alpha}$ for $\alpha>0$.
	Suppose by contradiction that the loss on $C$ cannot be made arbitrarily small, 
	that is there exists an $\epsilon>0$ such that $\Phi(U,V)\geq \epsilon$ for all $(U,V) \in C$, where $\epsilon < \alpha$. 
	By definition, $L_\alpha$ can be written as the pre-image of an open set under a continuous function, that is
	$L_\alpha=\Phi^{-1}(\Setbar{a}{a<\alpha})$, and thus $L_\alpha$ must be an open set (see Proposition \ref{prop:inverse_open}).
	Since $C$ is a non-empty connected component of $L_\alpha$, $C$ must be an open set as well,
	and thus $C$ has non-zero Lebesgue measure. 
         By Lemma \ref{lem:full_rank} the set of $U$ where $\Psi(U)$ has not full rank has measure zero and thus $C$ must contain a point $(U,V)$ such that $\Psi(U)$ has full rank.
         Let $Y$ be the usual zero-one one-hot encoding of the target network output.
         As $\Psi(U)$ has full rank, there always exist $V^*$ such that $\Psi(U)V^*=Y t^*$, where $t^*= \log\big(\frac{m-1}{e^{\frac{\epsilon}{2}}-1}\big)$ 
         Note that the loss of $(U,V^*)$ is
        %=-\frac{1}{N}\sum_{i=1}^N \log\Big(\frac{e}{e+(m-1)}\Big) = - \log\Big(\frac{e}{e + (m-1)}\Big).\]
          %By upscaling $V^*$ to $V(t)=t V^*$ for $t\geq 1$, we get
          %\[ \Phi(U,V(t))
           \[ \Phi(U,V^*)  =- \log\Big(\frac{e^{t^*}}{e^{t^*} + (m-1)}\Big) = \log(1 + (m-1)e^{-t^*})=\frac{\epsilon}{2}.\]
          %For $t^*= \log\big(\frac{m-1}{e^{\frac{\epsilon}{2}}-1}\big)$ we get $\Phi(U,V(t^*))=\frac{\epsilon}{2}$. 
          As the cross-entropy loss $\Phi(U,V)$ is convex in $V$ and $\Phi(U,V)<\alpha$ we have for the line segment $V(\lambda)=\lambda V+(1-\lambda)V^*$ for $\lambda \in [0,1]$,
          \[ \Phi(U,V(\lambda)) \leq \lambda \Phi(U,V) + (1-\lambda) \Phi(U,V^*) < \lambda \alpha + (1-\lambda)\frac{\epsilon}{2} < \alpha.\]
          Thus the whole line segment is contained in $L_{\alpha}$ and as  $C$ is a connected component it has to be contained in $C$. However, this
        contradicts the assumption that for all $(U,V) \in C$ it holds $\Phi(U,V)\geq \epsilon$. Thus on every connected component $C$ of $L_{\alpha}$
        the training loss can be made arbitrarily close to zero and thus the loss landscape has no bad valleys.

	\item 
	Let $(U_0,V_0)$ be a strict suboptimal local minimum, 
	then there exists $r>0$ such that $\Phi(U,V) > \Phi(U_0,V_0)>0$ for all $(U,V) \in B((U_0,V_0),r)\setminus\Set{(U_0,V_0)}$
	where $B(\cdot,r)$ denotes a closed ball of radius $r$.
	Let $\alpha = \min_{(U,V) \in \partial B\big((U_0,V_0),r\big)} \Phi(U,V)$ 
	which exists as $\Phi$ is continuous and the boundary $\partial B\big((U_0,V_0),r\big)$ of $B\big((U_0,V_0),r\big)$ is compact.
	Note that $\alpha>\Phi(U_0,V_0)$ as $(U_0,V_0)$ is a strict local minimum. 
	Consider the sub-level set $D=L_{\frac{\alpha + \Phi(U_0,V_0)}{2}}$. 
  As $\Phi(U_0,V_0) < \frac{\alpha + \Phi(U_0,V_0)}{2}$ it holds $(U_0,V_0)\in D.$
	Let $E$ be the connected component of $D$ which contains $(U_0,V_0)$, that is, $(U_0,V_0) \in E\subseteq D$. 
	It holds $E\subset B\big((U_0,V_0),r\big)$ as $\Phi(U,V)<\frac{\alpha + \Phi(U_0,V_0)}{2}<\alpha$ for all $(U,V) \in E$.
	Moreover, $\Phi(U,V)\geq\Phi(U_0,V_0)>0$ for all $(U,V) \in E$ and thus $\Phi$ can not be made arbitrarily small on a connected
	component of a sublevel set of $\Phi$ and thus $E$ would be a bad local valley which contradicts 3.3.2.
%	Now one can show that $E\subseteq B\big((U_0,V_0),r\big)$
%	because otherwise due to the connectedness of $E$ and the fact that $E$ contains $(U_0,V_0)$,
%	it follows that $E$ must contain some point on the boundary of the ball, 
%	say $(U_1,V_1)\in E\cap\partial B\big((U_0,V_0),r\big)$,
%	which eventually leads to a contradiction between the following facts: 
%	\begin{align*}
%	    &\textrm{From } (U_1,V_1)\in E\subseteq D &\implies& \Phi(U_1,V_1)<\alpha, \\
%	    &\textrm{From } (U_1,V_1)\in \partial B\big((U_0,V_0),r\big) &\implies& \Phi(U_1,V_1)\geq\alpha.
%	\end{align*}
%	
%	Finally, as we showed that $E\subseteq B\big((U_0,V_0),r\big)$,
%	it follows that the training loss cannot be made arbitrarily close to zero inside $E$ 
%	because one has 
%	\begin{align*}
%	    \inf_{(U,V)\in E}\Phi(U,V) 
%	    &\geq \inf_{(U,V)\in B\big((U_0,V_0),r\big)}\Phi(U,V) \\
%	    &= \min\limits_{(U,V)\in B\big((U_0,V_0),r\big)}\Phi(U,V)\\
%	    &>0.
%	\end{align*}
%	This contradicts 3.3.2.
	
	\item Suppose by contradiction that $(U,V)$ is a local maximum.
	Then the Hessian of $\Phi$ is negative semi-definite. 
	However, as principal submatrices of negative semi-definite matrices are again negative semi-definite,
         then also the Hessian of $\Phi$ w.r.t $V$ must be negative semi-definite.
	However, $\Phi$ is always convex in $V$ and thus its Hessian restricted to $V$ is positive semi-definite.
	The only matrix which is both p.s.d. and n.s.d. is the zero matrix.
	It follows that $\nabla^2_{V}\Phi(U,V)=0.$
	One can easily show that
	\begin{align*}
	    \nabla^2_{V_{:j}}\Phi 
	    = \sum_{i=1}^N \frac{e^{G_{ij}}}{\sum_{k=1}^m e^{G_{ik}}} \Big( 1 - \frac{e^{G_{ij}}}{\sum_{k=1}^m e^{G_{ik}}} \Big) \Psi_{i:} \Psi_{i:}^T
	\end{align*}
	From Assumption \ref{ass} it holds that there exists $j\in[N]$ s.t. $\sigma_{p_j}$ is strictly positive, 
	and thus some entries of $\Psi_{i:}$ must be strictly positive.
	Moreover, one has $\frac{e^{G_{ij}}}{\sum_{k=1}^m e^{G_{ik}}}\in(0,1).$ 
	It follows that some entries of $\nabla^2_{V_{:j}}\Phi$ must be strictly positive.
	Thus $\nabla^2_{V_{:j}}\Phi$ cannot be identically zero, leading to a contradiction.
	Therefore $\Phi$ has no local maximum.

%         \item We generate randomly some parameter configuration $U$ sampling from some probability distribution which has a density with respect to the Lebesgue measure.
% 	By Lemma \ref{lem:full_rank} the resulting matrix $\Psi(U)$ will then have full rank with probability 1. Then we just have to solve the resulting linear system, $\Psi(U)V=Y$, which
% 	can be done in polynomial time. Afterwards we rescale the solution as in 3) so that we obtain the desired objective value.
\end{enumerate}

\end{proof}
% \fi
Theorem \ref{th:CE} shows that there are infinitely many solutions which achieve zero training error, 
and the loss landscape is nice in the sense that from any point in the parameter space
there exists a continuous path that drives the loss arbitrarily close to zero (and thus a solution with zero training error)
on which the loss is non-increasing.
% Finally, a solution which has arbitrarily small objective can be found in polynomial time with probability 1. 
%We note that the results of Theorem \ref{th:CE} hold 
%for networks with sigmoid activation function and for th softplus case 
%under an additionally mild assumption on the architecture (see Lemma \ref{lem:full_rank}).

While the networks are over-parameterized, we show in the next Section \ref{sec:exp} that
the modification of standard networks so that they fulfill our conditions leads nevertheless
to good generalization performance, often even better than the original network.
We would like to note that the proof of Theorem \ref{th:CE} also suggests a different algorithm to achieve
zero training error: one initializes all weights, except the weights to the output layer, randomly (e.g.
Gaussian weights), denoted as $U$, and then just solves the linear system $\Psi(U)V=Y$ to obtain the weights $V$ to the output layer.
Basically, this algorithm uses the network as a random feature generator and fits the last layer directly to achieve zero training error. 
The algorithm is successful with probability 1 due to Lemma \ref{lem:full_rank}. Note that from a solution with zero training error 
one can drive the cross-entropy loss to zero by upscaling to infinity but this does not change the classifier.
We will see, that this simple algorithm shows bad generalization performance and overfitting, whereas
training the full network with SGD  leads to good generalization performance. This might seem counter-intuitive as our networks
have more parameters than the original networks but is inline with recent observations in \cite{Zhang2017} that state-of-the art networks, also heavily over-parameterized,
can fit even random labels but still generalize well on the original problem. Due to this qualitative difference
of SGD and the simple algorithm which both are able to find solutions with zero training error, we think that our class of networks
is an ideal test bed to study the implicit regularization/bias of SGD, see e.g. \cite{SoudryEtal2018}.

\section{Experiments}\label{sec:exp}
The main purpose of this section is to investigate the generalization ability of practical neural networks 
with skip-connections added to the output layer to fulfill Assumption \ref{ass}.
% We will also compare the simple polynomial algorithm suggested by Theorem \ref{theo:linear_independence_DenseNet} and \ref{th:CE} 
% to the usual training of the full network via SGD.
%This section shows that the presented class of neural networks not only have a very nice loss surface
%but also enjoy good generalization performance empirically.
%We will also compare how the global minima founded by a simple algorithm suggested by Theorem \ref{theo:linear_independence_DenseNet}
%and \ref{th:CE} perform in comparison to the solutions found by SGD in practice. 

\paragraph{Datasets.} 
We consider MNIST and CIFAR10 datasets.
MNIST contains $5.5\times 10^4$ training samples and $10^4$ test samples, and 
CIFAR10 has $5\times 10^4$ training samples and $10^4$ test samples. 
We do not use any data pre-processing nor data-augmentation in all of our experiments.
% For every experiment on CIFAR10 described below, we consider both settings with and without data-augmentation.
% Due to space limitation, we present our results below for the standard setting without data-augmentation, 
% and refer to the appendix for the results when data-augmentation is used.
 
\paragraph{Network architectures.} 
For MNIST, we use a plain CNN architecture with $13$ layers, denoted as CNN13 
(see Table \ref{tab:mnist_cnn} in the appendix for more details about this architecture). 
For CIFAR10 we use VGG11, VGG13, VGG16 \citep{VGG} 
%, AllCNN \citep{SpringenbergEtAl2015} 
 and DenseNet121 \citep{Huang2017}.
As the VGG models were originally proposed for ImageNet and have very large fully connected layers,
we adapted these layers for CIFAR10 by reducing their width from $4096$ to $128$.
For each given network, we create the corresponding skip-networks
by adding skip-connections to the output so that our condition $M\geq N$ from the main theorem is satisfied.
In particular, we aggregate all neurons of all the hidden layers in a pool and 
randomly choose from there a subset of $N$ neurons to be connected to the output layer 
(see e.g. Figure \ref{fig:network} for an illustration).
As existing network architectures have a large number of feature maps per layer,
the total number of neurons is often very large compared to number of training samples, 
thus it is easy to choose from there a subset of $N$ neurons to connect to the output.
In the following, we test both sigmoid and softplus activation function ($\gamma=20$) for each network architecture and their skip-variants.
We use the standard cross-entropy loss and train all models with SGD+Nesterov momentum for $300$ epochs.
The initial learning rate is set to $0.1$ for Densenet121 and $0.01$ for the other architectures.
Following \citet{Huang2017}, we also divide the learning rate by $10$ after $50\%$ and $75\%$ of 
the total number of training epochs.
Note that we do not use any explicit regularization like weight decay or dropout.

% In all our experiments, the conditions of Assumption \ref{ass} are satisfied for the sigmoid activation function,
% and thus the main results of Theorem \ref{th:CE} hold. However, for $\mathrm{softplus}$, we do not check if the 
% additional condition in Assumption \ref{ass} that there exists a backward path from all $N$ skip-neurons to the 
% first hidden layer only visiting neurons which are not skip-neurons as this is quite costly.
The main goal of our experiments is to investigate the influence of the additional skip-connections 
to the output layer on the generalization performance. We report the test accuracy for the original models
and the ones with skip-connections to the output layer. For the latter one we have two different algorithms: standard SGD 
for training the full network as described above (SGD) and the randomized procedure (rand).
The latter one uses a slight variant of the simple algorithm described at the end of the last section: randomly initialize the
weights of the network $U$ up to the output layer by drawing each of them from a truncated Gaussian distribution with zero mean and variance $\frac{2}{d}$
where $d$ is the number of weight parameters and the truncation is done after $\pm 2$ standard deviations (standard keras initialization), 
then use SGD to optimize the weights $V$ for a linear classifier with fixed features $\Psi(U)$ which is a convex optimization problem.
% Note that the rand algorithm cannot be used with data augmentation in a straightforward way and thus we skip it for this part.

Our experimental results are summarized in Table \ref{tab:mnist} for MNIST and Table \ref{tab:main} for CIFAR10.
For skip-models, we report mean and standard deviation over $8$ random choices of the subset of $N$ neurons connected to the output.
\begin{table*}[ht]
\footnotesize
\begin{center}
\def\arraystretch{1.3}
\begin{tabular}{|l|c|c|}
\hline
\textbf{} & \textbf{Sigmoid activation function} & \textbf{Softplus activation function} \\
\hline
CNN13 & $11.35$ & $99.20$ \\
CNN13-skip (SGD) & $98.40\pm 0.07$ & $99.14\pm 0.04$ \\
\hline
\end{tabular}	
\end{center}
\caption{Test accuracy ($\%$) of CNN13 on MNIST dataset.
CNN13 denotes the original architecture from Table \ref{tab:mnist_cnn} 
while CNN13-skip denotes the corresponding skip-model.
There are in total $179,840$ hidden neurons from the original CNN13 (see Table \ref{tab:mnist_cnn}), 
out of which we choose a random subset of $N=55,000$ neurons to connect to the output layer to obtain CNN13-skip.
}
\label{tab:mnist}
\end{table*}
\vspace{-5pt}

\begin{table*}[ht]
\footnotesize
\begin{center}
\def\arraystretch{1.3}
\begin{tabular}{||l|c|c|c|c|}
\hline
&\multicolumn{2}{c|}{\textbf{Sigmoid activation function}} & \multicolumn{2}{c|}{\textbf{Softplus activation function}} \\
\hline
\hline
\textbf{Model} & Test acc ($\%$) & Train acc ($\%$) & Test acc ($\%$) & Train acc ($\%$) \\
\hline
\hline
\rowcolor{Gray}VGG11 & $10$ & $10$ & $78.92$ & $100$  \\
\rowcolor{Gray}VGG11-skip (rand) & $62.81\pm 0.39$ & $100$ & $64.49\pm 0.38$  & $100$  \\
\rowcolor{Gray}VGG11-skip (SGD) & $\textbf{72.51}\pm 0.35$ & $100$ & $\textbf{80.57}\pm 0.40$ & $100$ \\

\hhline{|=|=|=|=|=|}
\rowcolor{Gray}VGG13 & $10$ & $10$ & $80.84$ & $100$  \\
\rowcolor{Gray}VGG13-skip (rand) & $61.50\pm 0.34$ & $100$ & $61.42\pm 0.40$ & $100$  \\
\rowcolor{Gray}VGG13-skip (SGD) & $\textbf{70.24}\pm 0.39$ & $100$ & $\textbf{81.94} \pm 0.40$ & $100$ \\

\hhline{|=|=|=|=|=|}
\rowcolor{Gray}VGG16 & $10$ & $10$ & $81.33$ & $100$  \\
\rowcolor{Gray}VGG16-skip (rand) & $61.57\pm 0.41$ & $100$ & $61.46\pm 0.34$ & $100$  \\
\rowcolor{Gray}VGG16-skip (SGD) & $\textbf{70.61}\pm 0.36$ & $100$ & $\textbf{81.91}\pm 0.24$ & $100$ \\

\hhline{|=|=|=|=|=|}
Densenet121 & $\textbf{86.41}$ & $100$ & $\textbf{89.31}$ & $100$ \\
Densenet121-skip (rand) & $52.07\pm 0.48$ & $100$ & $55.39\pm 0.48$ & $100$  \\
Densenet121-skip (SGD) & $81.47\pm 1.03$ & $100$ & $86.76\pm 0.49$ & $100$ \\
\hline
\end{tabular}	
\end{center}
\caption{Traning and test accuracy of several CNN architectures with/without skip-connections on CIFAR10 (no data-augmentation).
For each original model A, A-skip denotes the corresponding skip-model in which
a subset of $N$ hidden neurons ``randomly selected''  from the hidden layers are connected to the output units.
For Densenet121, these neurons are randomly chosen from the first dense block.
The names in open brackets (rand/SGD) specify how the networks are trained: \textbf{rand} ($U$ is randomized and fixed while $V$ is learned with SGD), 
\textbf{SGD} (both $U$ and $V$ are optimized with SGD).
Additional experimental results with data-augmentation are shown in Table \ref{tab:main_aug} in the appendix.
}
\label{tab:main}
\end{table*}
\vspace{-5pt}

\paragraph{Discussion of results.}
First of all, we note that adding skip connections to the output 
improves the test accuracy in almost all networks (with the exception of Densenet121)
when the full network is trained with SGD. 
In particular, for the sigmoid activation function the skip connections allow for all models except
Densenet121 to get reasonable performance whereas training the original model fails. This effect can be directly related
to our result of Theorem \ref{th:CE} that the loss landscape of skip-networks has no bad local valley and thus it is
not difficult to reach a solution with zero training error 
(see Section \ref{sec:train_err} in the appendix for more detailed discussions on this issue, 
as well as Section \ref{sec:visualize} for a visual example which shows why the skip-models can succeed while the original models fail).
The exception is Densenet121 which gets already good performance
for the sigmoid activation function for the original model. We think that the reason is that the original Densenet121 architecture 
has already quite a lot of skip-connections between the hidden layers 
which thus improves the loss surface already so that the additional connections added to the output units 
are not necessary anymore.

The second interesting observation is that we do not see any sign of overfitting for the SGD version even though we have increased for all models the number of parameters
by adding skip connections to the output layer and we know from Theorem \ref{th:CE} that for all the skip-models one can easily achieve zero training error. 
This is in line with the recent observation of \cite{Zhang2017} that modern heavily over-parameterized networks can fit everything (random labels, random input)
but nevertheless generalize well on the original training data when trained with SGD. This is currently an active research area to show that SGD has some implicit
bias \citep{Neyshabur2017c,Brutzkus2018,SoudryEtal2018} which leads to a kind of regularization effect similar to the linear least squares problem where SGD converges
to the minimum norm solution. Our results confirm that there is an implicit bias as we see a strong contrast to the (skip-rand) results obtained by using the network as a 
random feature generator and just fitting the connections to the output units (\ie $V$)
which also leads to solutions with zero training error with probability 1 as shown in Lemma \ref{lem:full_rank}
and the proof of Theorem \ref{th:CE}. For this version we see that the test accuracy gets worse as one is moving from simpler networks (VGG11) to more complex ones (VGG16
and Densenet121) which is a sign of overfitting. Thus we think that our class of networks is also an interesting test bed to understand the implicit regularization effect
of SGD. It seems that SGD selects from the infinite pool of solutions with zero training error one which generalizes well, whereas the randomized feature generator selects
one with much worse generalization performance.

\section{Related work} 
  In the literature, many interesting theoretical results have been developed on the loss surface of neural networks 
  \cite{Yu95,HaeVid2015,Choro15,Kawaguchi16,SafSha2016,Hardt2017,Yun2017,LuKawaguchi2017,Venturi2018,LiangICML2018,ZhangShaoSala2018,Nouiehed2018}.
  The behavior of SGD for the minimization of training objective has been also analyzed for various settings
    \citep{Andoni2014,Sedghi2015,Janzamin15,GauNgoHei2016,Brutzkus2017,Soltanolkotabi2017,Soudry17,Zhong2017,Tian2017,DuEtal2018,Wang2018}
    to name a few.
    Most of current results are however limited to shallow networks (one hidden layer), 
    deep linear networks and/or making simplifying assumptions on the architecture or the distribution of training data.
%     Recently \cite{Venturi2018} have studied the relationship between the intrinsic dimension of neural networks with 
%     the presence/absence of spurious valleys. 
%     While their main result on the absence of spurious valleys applies to one-hidden-layer networks 
%     with polynomial activation functions and expected square loss,
%     our results are established for deep nonlinear networks with standard activation functions and empirical cross-entropy loss.
    An interesting recent exception is \cite{LiaEtAl2018} where they show that for binary classification one neuron with a skip-connection to the output layer and exponential activation function is enough to eliminate all bad local minima under mild conditions on the loss function. 
    More closely related in terms of the setting are \citep{Quynh2017,Quynh2018b}  where they study the loss surface of fully connected and convolutional networks
    if one of the layers has more neurons than the number of training samples for the standard multi-class problem.
    However, the presented results are stronger as we show that our networks do not have any suboptimal local valley or strict local minima 
    and there is less over-parameterization if the number of classes is small. 
    
    %More recently, it has been shown that 
    %the loss surface for a class of deep and wide fully connected neural networks \citep{Quynh2017} 
    %is well-behaved if one hidden layer has more neurons than the number of training samples, see also \citet{Quynh2018b} for 
    %a similar result for CNN's.

\section{Conclusion} 
We have identified a class of deep neural networks whose loss landscape has no bad local valleys.
While our networks are over-parameterized and can easily achieve zero training error,
they generalize well in practice when trained with SGD. Interestingly, a simple different algorithm using the network as random feature generator also
achieves zero training error but has significantly worse generalization performance. Thus we think that our class of
models is an interesting test bed for studying the implicit regularization effect of SGD.
%While our assumptions cleary do not cover all situations in practice,
%we think that such result is still interesting from a theoretical perspective 
%as it shows that the loss function of a general class of multiple layer networks 
%could be still well-behaved despite of their non-convexity and non-linearity nature.
%Finally, our experiments suggest an interesting open problem for future work 
%on the implicit bias of SGD towards high quality solutions among infinitely many solutions with zero training error.

\bibliography{regul}
\bibliographystyle{iclr2019_conference}
% \small
% \bibliography{regul}
% \bibliographystyle{plain} 

\appendix
\section{Mathematical Tools}
In the proof of Lemma \ref{lem:full_rank} we make use of the following property of analytic functions.
\begin{lemma}\citep{Dan15,Boris15}\label{lem:zeros_of_analytic}
    If $f:\RR^n\to\RR$ is a real analytic function which is not identically zero
    then the set $\Setbar{x\in\RR^n}{f(x)=0}$ has Lebesgue measure zero.
\end{lemma}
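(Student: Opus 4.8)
The statement is a classical fact about real-analytic functions, so the plan is the standard proof: induction on the dimension $n$, using the identity theorem on the real line for the base case and Tonelli's theorem to pass from $n-1$ to $n$.

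\textbf{Base case $n=1$.} I would argue that if $f:\RR\to\RR$ is real analytic and not identically zero, then its zero set has no accumulation point in $\RR$: at such a point every derivative of $f$ would vanish, but the set of points at which all derivatives of $f$ vanish is simultaneously open (by the local power-series expansion of $f$) and closed, hence equals $\RR$ by connectedness, contradicting $f\not\equiv 0$. A subset of $\RR$ with no accumulation point is countable, so the zero set is countable and therefore has Lebesgue measure zero.

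\textbf{Inductive step.} Assume the claim in dimension $n-1$, and let $f:\RR^n\to\RR$ be real analytic with $f\not\equiv 0$; write $x=(x',x_n)$ with $x'\in\RR^{n-1}$ and let $Z=\{x:f(x)=0\}$, which is closed, hence Lebesgue measurable. For each fixed $x'$ the map $x_n\mapsto f(x',x_n)$ is real analytic on $\RR$, so by the base case its zero set in $\RR$ is null unless it is identically zero. Let $A=\{x'\in\RR^{n-1}:f(x',\cdot)\equiv 0\}$; the task is to show $A$ is null in $\RR^{n-1}$. For $k\ge 0$ put $h_k(x')=\partial_{x_n}^k f(x',0)$, a real-analytic function on $\RR^{n-1}$, and observe $A\subseteq\bigcap_{k\ge 0}\{h_k=0\}$. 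Not every $h_k$ can vanish identically: if they all did, then for every $x'$ the analytic function $x_n\mapsto f(x',x_n)$ would have all derivatives zero at $x_n=0$ and so be identically zero, forcing $f\equiv 0$. Choosing $k_0$ with $h_{k_0}\not\equiv 0$, the induction hypothesis gives that $\{h_{k_0}=0\}$, hence $A$, is null. Then Tonelli's theorem yields
\[
\mathrm{vol}(Z)=\int_{\RR^{n-1}}\big|\{x_n:f(x',x_n)=0\}\big|\,dx',
\]
and the integrand vanishes for $x'\notin A$ while the integral over the null set $A$ contributes $0$; hence $\mathrm{vol}(Z)=0$.

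\textbf{Main obstacle.} Once the slicing is set up the argument is routine; the only genuinely substantive point is the reduction that not all of the coefficient functions $h_k$ vanish identically — equivalently, that a real-analytic function on $\RR^n$ cannot be ``infinitesimally zero'' along the $x_n$-direction at every base point without being zero — which is again just the identity theorem on a line applied slice by slice. Minor care is also needed with the measure-theoretic bookkeeping: $Z$ is closed hence measurable, the slice-measure function is measurable and nonnegative (so Tonelli applies with no integrability hypothesis), and one uses the convention $0\cdot\infty=0$ for the slices over $A$, which may be all of $\RR$. A slicker but heavier alternative is to invoke that the zero set of a nonzero real-analytic function is a locally finite union of analytic submanifolds of dimension $\le n-1$ and is therefore Lebesgue-null; the elementary Tonelli induction above avoids that machinery and is essentially the argument used in the cited references.
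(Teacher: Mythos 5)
Your proof is correct. Note, however, that the paper does not prove this lemma at all: it is imported by citation (to the references abbreviated Dan15 and Boris15) and used as a black box in the proof of Lemma \ref{lem:full_rank}. The induction-on-dimension argument you give --- identity theorem on $\RR$ for the base case, then slicing with Tonelli and the observation that some Taylor coefficient function $h_{k_0}=\partial_{x_n}^{k_0}f(\cdot,0)$ is a not-identically-zero real-analytic function on $\RR^{n-1}$ --- is exactly the standard proof found in those cited sources, and your handling of the measure-theoretic details (measurability of the closed set $Z$, the slices over $A$ being all of $\RR$ but $A$ being null) is sound. Nothing further is needed.
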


We recall the following standard result from topology (see \eg \citet{Apostol1974}, Theorem 4.23, p. 82),
which is used in the proof of Theorem \ref{th:CE}.
\begin{proposition}\label{prop:inverse_open}
    Let $f:\RR^m\to\RR^n$ be a continuous function.
    If $U\subseteq\RR^n$ is an open set then $f^{-1}(U)$ is also open.
\end{proposition}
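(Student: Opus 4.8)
The plan is to prove openness of $f^{-1}(U)$ pointwise, by showing that every point of $f^{-1}(U)$ is an interior point. I would work directly from the $\epsilon$--$\delta$ characterization of continuity on Euclidean spaces, which states that for each $x$ and each $\epsilon>0$ there is a $\delta>0$ with $\norm{y-x}<\delta \Rightarrow \norm{f(y)-f(x)}<\epsilon$. This matches the setting $f:\RR^m\to\RR^n$ in which the proposition is stated.

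First I would fix an arbitrary point $x\in f^{-1}(U)$; the goal is to produce an open ball centered at $x$ that is entirely contained in $f^{-1}(U)$. By definition of the preimage, $f(x)\in U$. Since $U$ is open in $\RR^n$, there exists $\epsilon>0$ such that the open ball $B(f(x),\epsilon)\subseteq U$. This is the first of the two radii, and it is determined solely by the openness of the target set $U$. Next I would invoke continuity of $f$ at the point $x$ with this particular $\epsilon$: there exists $\delta>0$ such that $\norm{y-x}<\delta$ implies $\norm{f(y)-f(x)}<\epsilon$, i.e. $f(B(x,\delta))\subseteq B(f(x),\epsilon)$. Chaining the two inclusions gives $f(B(x,\delta))\subseteq B(f(x),\epsilon)\subseteq U$, which is precisely the statement $B(x,\delta)\subseteq f^{-1}(U)$. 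Hence $x$ is an interior point of $f^{-1}(U)$, and since $x$ was arbitrary, $f^{-1}(U)$ is open.

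There is no genuine obstacle here, as the result is a standard fact of elementary analysis. The only point requiring a little care is the order in which the two radii are chosen: one must first extract the $\epsilon$-ball from the openness of $U$, and only then feed that $\epsilon$ into the continuity condition to obtain $\delta$; performing the selection in the reverse order would not close the argument. I would note that if instead one adopts the abstract topological definition of continuity (namely, that preimages of open sets are open), then the statement is immediate by definition; I nonetheless prefer the $\epsilon$--$\delta$ route above since it is self-contained and avoids any circularity with the notion of continuity actually used elsewhere in the paper.
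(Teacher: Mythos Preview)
Your proof is correct and entirely standard. The paper itself does not give a proof of this proposition at all; it merely states the result and cites a textbook reference (Apostol 1974, Theorem 4.23), so there is nothing to compare your argument against.
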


\section{Proof of Lemma \ref{lem:full_rank}}
\begin{proof}
%A proof sketch is given below.
We assume w.l.o.g. that $\Set{p_1,\ldots,p_N}$ is a subset of the neurons with skip connections to the output layer 
and satisfy Assumption \ref{ass}.
In the following, we will show that there exists a weight configuration $U$ such that the submatrix $\Psi_{1:N,1:N}$ has full rank. Using then
that the determinant is an analytic function together with Lemma \ref{lem:zeros_of_analytic}, 
we will conclude that the set of weight configurations $U$
such that $\Psi$ has \emph{not} full rank has Lebesgue measure zero.

We remind that all the hidden units in the network are indexed from the first hidden layer till the higher layers as $d+1,\ldots,d+H.$
For every hidden neuron $j\in[d+1,d+H]$, $u_j$ denotes the associated weight vector
$$
    u_j=[u_{k\to j}]_{k\in\mathrm{in}(j)} \in\RR^{|\mathrm{in}(j)|}, \quad \textrm{where } \mathrm{in}(j) = \textrm{the set of incoming units to unit } j .
$$
Let $n_1$ be the number of units of the first hidden layer. 
% \begin{enumerate}
%     \item 
    For every neuron $j$ from the first hidden layer, let us define the pre-activation output $g_j$,
    $$g_j(x_i)=\sum_{k\to j} (x_i)_k u_{k\to j}.$$
    Due to Assumptions \ref{ass} (condition 3),
    we can always choose the weights $\Set{u_{d+1},\ldots, u_{d+n_1}}$ so that
    the output of every neuron in the first layer is distinct for different training samples, that is $g_j(x_i) \neq g_j(x_{i'})$ 
    for every $j\in[d+1,d+n_1]$ and $i\neq i'.$
    For every neuron $j\in[d+n_1+1,d+H]$ in the higher layers
    we choose the weight vector $u_j$ such that it has exactly one $1$ and $0$ elsewhere. 
    According to our definition of network in Section \ref{sec:net},
    the weight vectors of neurons of the same layer need not have the same dimension, 
    but any subgroup of these neurons can still have shared weights as long as the dimensions among them agree.
    Thus the above choice of $u$ is always possible.
    In the following, let $c(j)$ denote the neuron below $j$ such that $u_{c(j)\to j}=1.$
    This leads to 
    $$\sum_{k\to j} f_k(x) u_{k\to j} = f_{c(j)}(x).$$
    
%     \item 
    Let $\alpha\bydef(\alpha_{d+1},\ldots,\alpha_{d+H})$ be a tuple of positive scalars.
    Let $\beta\in\RR$ such that $\sigma_{p_j}(\beta)\neq 0$ for every $j\in[N].$ 
    We consider a family of configurations of network parameters of the form
    $(\alpha_j u_j, b_j)_{j=d+1}^{d+H}$, where the biases are chosen as 
    \begin{align*}
	&b_{p_j} = \beta - \alpha_{p_j} g_{p_j}(x_j) \quad\forall\,j\in[N],p_j\in[d+1,d+n_1]\\
	&b_{p_j} = \beta - \alpha_{p_j} f_{c(p_j)}(x_j) \quad\forall\,j\in[N],p_j\notin[d+1,d+n_1]\\
	&b_j=0 \quad \forall\,j\in\Set{d+1,\ldots,d+H}\setminus\Set{p_1,\ldots,p_N}
    \end{align*}
    Note that the assignment of biases can be done via a forward pass through the network.
    By the above choice of biases and our definition of neurons in Section \ref{sec:net}, we have
    \begin{align}\label{eq:hidden_neuron}
	&f_{p_j}(x_i) = \sigma_{p_j}\Big( \beta + \alpha_{p_j} \big( g_{p_j}(x_i)-g_{p_j}(x_j) \big) \Big),\quad\forall\,j\in[N],p_j\in[d+1,d+n_1], \nonumber\\
	&f_{p_j}(x_i) =  \sigma_{p_j}\Big( \beta + \alpha_{p_j}  \big(f_{c(p_j)}(x_i)-f_{c(p_j)}(x_j)\big) \Big) \quad\forall j\in[N],p_j\notin[d+1,d+n_1], \nonumber\\
	&f_{j}(x_i) =  \sigma_{j}\Big( \alpha_j f_{c(j)}(x_i) \Big) \quad\forall j\in\Set{d+n_1+1,\ldots,d+H}\setminus\Set{p_1,\ldots,p_N}, \nonumber\\
	&f_{j}(x_i) =  \sigma_{j}\Big( \alpha_j g_j(x_i) \Big) \quad\forall j\in\Set{d+1,\ldots,d+n_1}\setminus\Set{p_1,\ldots,p_N}.
    \end{align}
    One notes that the output of every skip-connection neuron $p_j$ is given by the first equation if $p_j$ lies on the first layer
    and by the second equation if $p_j$ lies on higher layers.
    In the following, to reduce notational complexity we make a convention that: $f_{c(p_j)}=g_{p_j}$ for every $p_j$ lies on the first layer.
    This allows us to use the second equation for every skip-connection neuron, that is, 
    \begin{align}\label{eq:skip_neuron}
	f_{p_j}(x_i) =  \sigma_{p_j}\Big( \beta + \alpha_{p_j}  \big(f_{c(p_j)}(x_i)-f_{c(p_j)}(x_j)\big) \Big)\,\forall\,j\in[N].
    \end{align}
    Now, since $\alpha>0$ and all activation functions are strictly increasing by Assumption \ref{ass}, 
    one can easily show from the above recursive definitions that 
    if $p_j$ is a skip-connection neuron which does not lie on the first hidden layer then one has the relation:
    $f_{c(p_j)}(x_i)<f_{c(p_j)}(x_j)$ if and only if $g_{q_j}(x_i)<g_{q_j}(x_j)$, 
    where $q_j$ is some neuron in the first hidden layer.
    This means if one sorts the elements of the set $\Set{f_{c(p_j)}(x_1),\ldots,f_{c(p_j)}(x_N)}$ in increasing order 
    then for every positive tuple $\alpha$, the order is fully determined by the corresponding order of $\Set{g_{q_j}(x_1),\ldots,g_{q_j}(x_N)}$ for some neuron $q_j$ in the first layer. 
    Note that this order can be different for different neurons $q_j$ in the first layer, 
    and thus can be different for different skip-connection neurons $p_j$.
    Let $\pi$ be a permutation such that it holds for every $j=1,2,\ldots,N$ that 
    \begin{align}\label{eq:pi}
	\pi(j) 
	= \argmax_{i\in\Set{1,\ldots,N}\setminus\Set{\pi(1),\ldots,\pi(j-1)}} \quad f_{c(p_j)}(x_i)
    \end{align}
    It follows from above that $\pi$ is fully determined by the values of $g$ at the first layer.
    By definition one has $f_{c(p_j)}(x_{\pi_i})<f_{c(p_j)}(x_{\pi_j})$ for every $i>j.$
    Since $\pi$ is independent of every positive tuple $\alpha$ and fully determined by the values of $g$, 
    it can be fixed in the beginning.
    One can assume w.l.o.g. that $\pi$ is the identity permutation as otherwise
    one can reorder the training samples according to $\pi$ so that the rank of $\Psi$ does not change.
    Thus it holds for every $\alpha>0$ that
    \begin{align}\label{eq:delta_ij}
	\delta_{ij} \bydef f_{c(p_j)}(x_i) - f_{c(p_j)}(x_j) < 0 \quad\forall\, i, j\in[N], i>j
    \end{align}
% \end{enumerate}

Now, we are ready to show that there exists a positive tuple $\alpha$ for which $\Psi$ has full rank.
We consider two cases of the activation functions of skip-connection neurons as stated in Assumption \ref{ass}:
\begin{itemize} 
\item In the first case, the activation functions $\sigma_{p_j}:\RR \rightarrow \RR$ for every $j\in[N]$ 
are strictly increasing, bounded and $\lim_{t \rightarrow -\infty} \sigma_{p_j}(t)=0$.
In the following, let $l(j)$ denote the layer index of the hidden unit $j$.
        For every hidden unit $j\in\Set{d+1,\ldots,d+H}$  
        we set $\alpha_j$ to be the maximum of certain bounds (explained later in \eqref{eq:A_structure_DenseNet})
        associated to all skip-connection neurons $p_k$ lying on the same layer, that is,
        \begin{equation}\label{eq:alpha_pj}
	    \alpha_{j} = \max\Set{1,\; \max\limits_{k\in[N]|l(p_k)=l(j)} \max_{i>k} 
	    \frac{\sigma_{p_k}^{-1}(\epsilon) -\beta }{f_{c(p_k)}(x_i) - f_{c(p_k)}(x_k)} }
        \end{equation}
        where $\epsilon>0$ is an arbitrarily small constant which will be specified later.
        There are a few remarks we want to make for Eq. \eqref{eq:alpha_pj} before proceeding with our proof.
        First, the second term in \eqref{eq:alpha_pj} can be empty if there is no skip-connection unit $p_k$ which lies on the same layer
        as unit $j$, in which case $\alpha_j$ is simply set to $1$.
        Second, $\alpha_j$'s are well-defined by constructing the values $f_{c(p_k)}(x_r)$, $r=1,\ldots,N$ by a forward pass through the network
        (note that the network is a directed, acyclic graph; in particular, 
        in the formula of $\alpha_j$, one has $l(c(p_k))<l(p_k)=l(j)$
        and thus the computation of $\alpha_j$ is feasible given the values of hidden units lying below the layer of unit $j$, namely $f_{c(p_k)}$). 
        Third, if $j$ and $j'$ are two neurons from the same layer, \ie $l(j)=l(j')$, 
        then it follows from \eqref{eq:alpha_pj} that $\alpha_j=\alpha_{j'}$,
        meaning that their corresponding weight vectors are scaled by the same factor,
        thus any potential weight sharing conditions imposed on these neurons can still be satisfied.
        
        The main idea of choosing the above values of $\alpha$ is to obtain
        \begin{align}\label{eq:A_structure_DenseNet}
                   \Psi_{ij} = f_{p_j}(x_i) \leq \epsilon \quad \forall\ i,j \in [N], i>j.
        \end{align}
        To see this,  one first observes that the inequality \eqref{eq:delta_ij} holds for the constructed values of $\alpha$
        since they are all positive.
        From \eqref{eq:alpha_pj} it holds for every skip-connection unit $p_j$ that
        \begin{align*}
	    \alpha_{p_j} > \max\limits_{i>j} 
	    \frac{\sigma_{p_j}^{-1}(\epsilon) -\beta }{f_{c(p_j)}(x_i) - f_{c(p_j)}(x_j)} \quad\forall\, j\in[N]
        \end{align*}
        which combined with \eqref{eq:delta_ij} leads to 
        \begin{align*}
	    \alpha_{p_j} ( f_{c(p_j)}(x_i) - f_{c(p_j)}(x_j) ) 
	    \leq \sigma_{p_j}^{-1}(\epsilon) -\beta \quad \forall\,i,j\in[N], i>j.
        \end{align*}
        and thus using \eqref{eq:skip_neuron} we obtain \eqref{eq:A_structure_DenseNet}.

        Coming back to the main proof of the lemma, since $\sigma_{p_j} (j\in[N])$ are bounded there exists a finite positive constant $C$ 
        such that it holds that
        \begin{align}\label{eq:A_upperbound_DenseNet}
             |\Psi_{ij}| \leq C \quad\forall\, i,j\in[N]
         \end{align}
        By the Leibniz-formula one has
\begin{align}\label{eq:leibniz_DenseNet}
    \det(\Psi_{1:N,1:N}) = 
    \prod_{j=1}^N\sigma_{p_j}(\beta) + \sum_{\pi \in S_N \backslash \{\gamma\} } 
    \mathrm{sign}(\pi) \prod_{j=1}^{N} \Psi_{\pi(j) j}
\end{align}
where $S_N$ is the set of all $N!$ permutations of the set $\{1,\ldots,N\}$ and $\gamma$ is the identity permutation.
Now, one observes that for every permutation $\pi\neq \gamma$, there always exists at least one component $j$ where $\pi(j)>j$ 
in which case it follows from \eqref{eq:A_structure_DenseNet} and \eqref{eq:A_upperbound_DenseNet} that
      \begin{align*}
 \Big|\sum_{\pi \in S_N \backslash \{\gamma\} } 
    \mathrm{sign}(\pi) \prod_{j=1}^{N} \Psi_{\pi(j) j}\Big|\; \leq \; N! \,C^{N-1} \,\epsilon  
      \end{align*}
     By choosing $\epsilon=\frac{\Big|\prod_{j=1}^N \sigma_{p_j}(\beta)\Big|}{2 N! C^{N-1}}$, we get that 
    \begin{align*}
     \det(\Psi_{1:N,1:N}) 
    \geq \prod_{j=1}^N\sigma_{p_j}(\beta) - \frac{1}{2}\prod_{j=1}^N\sigma_{p_j}(\beta)=\frac{1}{2}\prod_{j=1}^N\sigma_{p_j}(\beta) \neq 0
   \end{align*}
    and thus $\Psi$ has full rank.
\item In the second case we consider the softplus activation function which satisfies our Assumption \ref{ass}
that there exists a backward path from every skip-connection neuron $p_j$ to the first hidden layer 
s.t. on this path there is no neuron 
which has skip-connections to the output or shared weights with other skip-connection neurons.

We choose all the weights and biases similarly to the first case.
The only difference is that for every skip-connection neuron $p_j(1\leq j\leq N)$, the position of $1$ in its weight vector 
$u_{p_j}$ is chosen s.t. the value of neuron $p_j$ is determined by the first neuron on
the corresponding backward path as stated in Assumption \ref{ass}, that is,
    \begin{align*}
	\sum_{k\to p_j} f_k(x_i) u_{k\to p_j} = f_{c(p_j)} (x_i) .
    \end{align*}
  For skip-connection neurons we set all $\Set{\alpha_{p_1},\ldots,\alpha_{p_N}}$ to some scalar variable $\alpha$, 
  and for non-skip connection neurons $j$ we set $\alpha_j=1$.
  From \eqref{eq:skip_neuron} and equations of \eqref{eq:hidden_neuron} we have
        \begin{align}\label{eq:chain}
	f_{p_j}(x_i) &= \sigma_{p_j}\Big( \beta + \alpha\big( f_{c(p_j)}(x_i)-f_{c(p_j)}(x_j)\big) \Big) \quad\forall j\in[N] \nonumber, \\
	f_{j}(x_i) &=  \sigma_{j}\big(f_{c(j)}(x_i) \big) \quad\forall j\in\Set{d+1,\ldots,d+H}\setminus\Set{p_1,\ldots,p_N}.
    \end{align}
  Note that with above construction of $u$ and $\alpha$, 
  the only case where our weight sharing conditions can be potentially violated 
  is between a skip-connection neuron ($\alpha_j=\alpha$) with a neuron on a backward path ($\alpha_j=1$).
  However, this is not possible because our assumption in this case states that 
  there is no weight sharing between a skip-connection neuron
  and a neuron on one of the backward paths.
  
  Next, by our assumption the recursive backward path $c^{(k)}(p_j)$ does not contain any skip-connection unit and thus will eventually 
     end up at some neuron $q_j \in [d+1,d+n_1]$ in the first hidden layer after some finite number of steps. 
     Thus we  can write for every $j\in[N]$
     \begin{align*}
       f_{p_j}(x_i) =  \sigma_{p_j}\Big(\beta + \alpha \big( f_{c(p_j)}(x_i)-f_{c(p_j)}(x_j)\big)\Big), 
    \end{align*} 
    where
    \begin{align*}
	f_{c(p_j)}(x_i)=\sigma_{c(p_j)} (\sigma_{c(c(p_j))}(\ldots( g_{q_j}(x_i) )\ldots)) \quad\forall\, i\in[N].
    \end{align*}
    Moreover, we have from \eqref{eq:delta_ij} that $f_{c(p_j)}(x_i)<f_{c(p_j)}(x_j)$ for every $i>j.$
 Note that softplus fulfills for $t<0$, $\sigma_\gamma(t)\leq \frac{1}{\gamma}e^{\gamma t}$, whereas for $t>0$ one has $\sigma_\gamma(t)\leq \frac{1}{\gamma}+t$.
  The latter property implies $\sigma^{(K)}(t) \leq \frac{K}{\gamma} +t$.
   Finally, this together implies that there exist positive constants $c_1,c_2,c_3,c_4$ such that it hods
   \[ |\prod_{j=1}^N \Psi_{\pi(j)j}| \, \leq \, c_1 e^{-\alpha c_2} (c_3 + \alpha)^{N-1}.\]
   This can be made arbitrarily small by increasing $\alpha$. 
   Thus we get
   \begin{align*}
     \lim_{\alpha \rightarrow \infty} \det(\Psi_{1:N,1:N})    = \prod_{j=1}^N\sigma_{p_j}(\beta)\neq 0
   \end{align*}
\end{itemize}

So far, we have shown that there always exist $U$ such that $\Psi$ has full rank.
Since every activation function is real analytic by Assumption \ref{ass}, every entry of $\Psi$ is also a real analytic function 
of the network parameters where $\Psi$ depends on.
    The set of low rank matrices $\Psi$ can be characterized by a system of equations 
    such that all the $\binom{M}{N}$ determinants of all $N \times N$ sub-matrices of $\Psi$ are zero. 
    As the determinant is a polynomial in the entries of the matrix
    and thus an analytic function of the entries and composition of analytic functions are again analytic, 
    we conclude that each determinant is an analytic function of $U$.
    As shown above, there exists at least one $U$ such that one of these determinant functions
    is not identically zero and thus by Lemma \ref{lem:zeros_of_analytic},
    the set of $U$ where this determinant is zero has measure zero. But as all submatrices need to have low rank in order that
    $\Psi$ has low rank, it follows that the set of $U$ where $\Psi$ has low rank has just measure zero.
\end{proof}

\section{Extension of Theorem \ref{th:CE} to general convex losses}\label{sec:general_loss}
In this section, we consider a more general training objective, defined as
\begin{align*}
    \Phi(U,V) = \varphi(G(U,V))
\end{align*}
where $G(U,V)=\Psi(U)V\in\RR^{N\times m}$ is the output of the network for all training samples at some given parameters $(U,V)$,
and $\varphi:\RR^{N\times m}\to\RR$ the loss function applied on the network output.
\begin{assumptions}\label{ass_convex_loss}
    The loss function $\varphi:\RR^{N\times m}\to\RR$ is convex and bounded from below.
\end{assumptions}
One can easily check that the following loss functions satisfy Assumption \ref{ass_convex_loss} as they are all convex and bounded from below by zero:
\begin{enumerate}
    \item The cross-entropy loss from Equation \ref{eq:cross_entropy}, in particular:
    $$\varphi(G) = \frac{1}{N}\sum_{i=1}^N - \log \Big(\frac{e^{G_{iy_i}}}{\sum_{k=1}^m e^{G_{ik}}}\Big),$$
    where $(x_i,y_i)_{i=1}^N$ is the training data with $y_i$ being the ground-truth class of $x_i$.
    \item The standard square loss (for classification/regression tasks)
    \begin{align}\label{eq:square_loss}
	\varphi(G)=\frac{1}{2}\norm{G-Y}_F^2,
    \end{align}
    where $Y\in\RR^{N\times m}$ is the ground-truth matrix.
    \item The multi-class Hinge-loss (for classification tasks)
    $$\varphi(G)=\frac{1}{N}\sum_{i=1}^N \max_{j\neq y_i} \max(0,1-(G_{iy_i}-G_{ij})),$$
    where $(x_i,y_i)_{i=1}^N$ is the training data with $y_i$ being the ground-truth class of $x_i$.
\end{enumerate}
By Assumption \ref{ass_convex_loss}, $\varphi$ is bounded from below, thus it attains a finite infimum:
$$p^*\bydef\inf_{G\in\RR^{N\times m}}\varphi(G) < \infty.$$
Basically, $p^*$ serves as a lower bound on our training objective $\Phi$.
For the above examples, it holds $p^*=0$.
Next, we adapt the definition of ``bad local valleys'' from Definition \ref{def:bad_valley} to the current setting.
\begin{definition}\label{def:bad_valley_convex_loss}
    The $\alpha$-sublevel set of $\Phi$ is defined as $L_{\alpha}=\Setbar{(U,V)}{\Phi(U,V) < \alpha}.$
    A local valley is defined as a connected component of some sublevel set $L_{\alpha}$.
    A bad local valley is a local valley on which the training objective $\Phi$ cannot be made arbitrarily close to $p^*$.
\end{definition}
The following result extends Theorem \ref{th:CE} to general convex losses.
The proofs are mostly similar as before, but we present them below for the completeness and convenience of the reader.
\begin{theorem}\label{th:convex_loss}
    The following holds under Assumption \ref{ass} and Assumption \ref{ass_convex_loss}:
    \begin{enumerate}
	\item There exist uncountably many solutions with zero training error.
	\item The loss landscape of $\Phi$ does not have any bad local valley.
	\item There exists no suboptimal strict local minimum.
	\item For cross-entropy loss \eqref{eq:cross_entropy} and square loss \eqref{eq:square_loss} 
	there exists no local maximum.
    \end{enumerate}
\end{theorem}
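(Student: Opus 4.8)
The plan is to mirror the four-part proof of Theorem \ref{th:CE} almost verbatim, replacing the zero lower bound of the cross-entropy loss by the infimum $p^*$ of $\varphi$ and invoking convexity of $\varphi$ in place of convexity of the cross-entropy loss. For part (1), I would fix $U$ outside the measure-zero set of Lemma \ref{lem:full_rank} so that $\Psi(U)$ has full rank $N$; then the linear system $\Psi(U)V=Y$ has a solution for every target $Y\in\RR^{N\times m}$, in particular for the one-hot encoding that yields zero training error, and since this holds for a set of $U$ of full measure there are uncountably many such solutions.

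For part (2), let $C$ be a non-empty connected component of a sublevel set $L_\alpha$ with $\alpha>p^*$, and suppose toward a contradiction that $\Phi\geq p^*+\epsilon$ on $C$ for some $\epsilon>0$; since $C\subseteq L_\alpha$ this forces $p^*+\epsilon<\alpha$. As in Theorem \ref{th:CE}, $L_\alpha$ is open by Proposition \ref{prop:inverse_open}, hence so is $C$, hence $C$ has positive Lebesgue measure, so by Lemma \ref{lem:full_rank} it contains a point $(U,V)$ with $\Psi(U)$ of full rank. Pick $G^*\in\RR^{N\times m}$ with $\varphi(G^*)<p^*+\epsilon$, which is possible since $p^*=\inf\varphi$, and note $\varphi(G^*)<p^*+\epsilon<\alpha$; solving $\Psi(U)V^*=G^*$ gives $\Phi(U,V^*)=\varphi(G^*)<\alpha$. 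Convexity of $\varphi$, hence of $\Phi$ in $V$ since $G=\Psi(U)V$ is affine in $V$, keeps the segment $V(\lambda)=\lambda V+(1-\lambda)V^*$ inside $L_\alpha$; as $C$ is a connected component it contains the whole segment, in particular $(U,V^*)$, contradicting $\Phi\geq p^*+\epsilon$ on $C$.

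Part (3) is identical to the corresponding part of Theorem \ref{th:CE}: a strict suboptimal local minimum $(U_0,V_0)$ with $\Phi(U_0,V_0)>p^*$ yields, via compactness of $\partial B((U_0,V_0),r)$, a sublevel set whose connected component $E$ through $(U_0,V_0)$ is trapped strictly inside the ball and on which $\Phi\geq\Phi(U_0,V_0)>p^*$; this is a bad local valley, contradicting part (2). For part (4) I would specialize to the two named losses: for the cross-entropy loss the computation of $\nabla^2_{V_{:j}}\Phi$ from the proof of Theorem \ref{th:CE} carries over unchanged, and for the square loss $\nabla^2_{V_{:j}}\Phi=\Psi(U)^\top\Psi(U)$, which is positive semidefinite with strictly positive diagonal entries because some column of $\Psi(U)$ is strictly positive, as Assumption \ref{ass} guarantees some $\sigma_{p_j}>0$; in either case the Hessian restricted to $V$ is a nonzero p.s.d. principal submatrix of the full Hessian, which therefore cannot be negative semidefinite, ruling out local maxima.

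The one point that genuinely differs from Theorem \ref{th:CE} — and hence the main thing to be careful about — is that $p^*$ need not be attained and need not equal $0$, so in part (2) the auxiliary target $G^*$ must be chosen with loss strictly between $p^*$ and $\alpha$ via an infimum argument rather than set to a prescribed value; and in part (4) the argument is intrinsically loss-specific, since a general convex bounded-below $\varphi$ such as a constant function makes the $V$-Hessian vanish and admits local maxima, which is precisely why the theorem restricts part (4) to the cross-entropy and square losses.
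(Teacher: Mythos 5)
Your proposal is correct and follows essentially the same route as the paper's proof: Lemma \ref{lem:full_rank} plus solvability of $\Psi(U)V=Y$ for part (1), openness of sublevel sets together with convexity of $\Phi(U,\cdot)$ and a near-infimizer $G^*$ for part (2), reduction of part (3) to part (2) via the compact sphere argument, and the positive-semidefinite, nonzero $V$-Hessian for part (4), where your observation that part (4) cannot extend to arbitrary convex losses matches the reason the theorem is stated only for the cross-entropy and square losses. The only (immaterial) differences are that you phrase part (2) as a contradiction where the paper argues directly for every $\epsilon\in(p^*,\alpha)$, and you use the per-column Hessian $\Psi(U)^{\top}\Psi(U)$ where the paper writes the full Hessian in Kronecker form.
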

\begin{proof}
\begin{enumerate}
	\item By Lemma \ref{lem:full_rank} the set of $U$ such that $\Psi(U)$ has not full rank $N$ has Lebesgue measure zero. 
	Given $U$ such that $\Psi$ has full rank, the linear system $\Psi(U)V=Y$ has for every possible target output matrix $Y\in\RR^{N\times m}$ at least one solution $V$.
	As this is possible for almost all $U$, there exist uncountably many solutions achieving zero training error.
	
	\item Let $C$ be a non-empty, connected component of some sub-level set $L_{\alpha}$ where $\alpha>p^*.$
	Note that $L_\alpha=\varnothing$ if $\alpha\leq p^*$ by Definition \ref{def:bad_valley_convex_loss}.
	Given any $\epsilon\in(p^*,\alpha)$, we will show that $C$ always contains a point $(U,V)$ s.t. $\Phi(U,V)\leq\epsilon$
	as this would imply that the loss $\Phi$ restricted to $C$ can always attain arbitrarily small value close to $p^*$.
	 
	We note that $L_\alpha=\Phi^{-1}((-\infty,\alpha))$ is an open set according to Proposition \ref{prop:inverse_open}.
	Since $C$ is a non-empty connected component of $L_\alpha$, $C$ must also be an open set with non-zero Lebesgue measure.
         By Lemma \ref{lem:full_rank} the set of $U$ where $\Psi(U)$ has not full rank has measure zero and thus $C$ must contain a point $(U,V)$ such that $\Psi(U)$ has full rank.
         By Assumption \ref{ass_convex_loss}, $\varphi$ attains its infimum at $p^*<\epsilon$, 
         and thus by continuity of $\varphi$, there exists $G^*\in\RR^{N\times m}$ such that $p^*\leq\varphi(G^*)\leq\epsilon.$
         As $\Psi(U)$ has full rank, there always exist $V^*$ such that $\Psi(U)V^*=G^*$.
         Now, one notes that the loss $\Phi(U,V)=\varphi(\Psi(U)V)$ is convex in $V$, 
         and that $\Phi(U,V)<\alpha$, 
         thus we have for the line segment $V(\lambda)=\lambda V+(1-\lambda)V^*$ for $\lambda \in [0,1]$,
          \[ \Phi(U,V(\lambda)) \leq \lambda \Phi(U,V) + (1-\lambda) \Phi(U,V^*) < \lambda \alpha + (1-\lambda)\epsilon < \alpha.\]
          Thus the whole line segment from $(U,V)$ to $(U,V^*)$ is contained in $L_{\alpha}.$
          Since $C$ is a connected component of $L_{\alpha}$ which contains $(U,V)$, it follows that $(U,V^*)\in C.$
          Moreover, one has $\Phi(U,V^*)=\varphi(G^*)\leq\epsilon$, 
          which thus implies that the loss can always be made $\epsilon$-small inside the set $C$ for every $\epsilon\in(p^*,\alpha).$
	\item 
	Let $(U_0,V_0)$ be a strict suboptimal local minimum, 
	then there exists $r>0$ such that $\Phi(U,V) > \Phi(U_0,V_0)>p^*$ 
	for all $(U,V) \in B((U_0,V_0),r)\setminus\Set{(U_0,V_0)}$
	where $B(\cdot,r)$ denotes a closed ball of radius $r$.
	Let $\alpha = \min_{(U,V) \in \partial B\big((U_0,V_0),r\big)} \Phi(U,V)$ 
	which exists as $\Phi$ is continuous and the boundary $\partial B\big((U_0,V_0),r\big)$ of $B\big((U_0,V_0),r\big)$ is compact.
	Note that $\Phi(U_0,V_0)<\alpha$ as $(U_0,V_0)$ is a strict local minimum,
	and thus $(U_0,V_0)\in L_\alpha.$
	Let $E$ be the connected component of $L_\alpha$ which contains $(U_0,V_0)$, that is, $(U_0,V_0) \in E\subseteq L_\alpha$. 
	Since the loss of every point inside $E$ is strictly smaller than $\alpha$,
	whereas the loss of every point on the boundary $\partial B\big((U_0,V_0),r\big)$ is greater than or equal to $\alpha$,
	$E$ must be contained in the interior of the ball, that is $E\subset B\big((U_0,V_0),r\big).$ 
	Moreover, $\Phi(U,V)\geq\Phi(U_0,V_0)>p^*$ for every $(U,V) \in E$ and 
	thus the values of $\Phi$ restricted to $E$ can not be arbitrarily close to $p^*$, 
	which means that $E$ is a bad local valley, which contradicts G.3.2.

	\item 
	The proof for cross-entropy loss is similar to Theorem \ref{th:CE}.
	For square loss, one has
	$$\Phi(U,V) = \frac{1}{2}\norm{\Psi(U)V-Y}_F^2 = \frac{1}{2}\norm{(\Id_m\otimes\Psi(U))\vec(V)-\vec(Y)}_2^2$$
	where $\otimes$ denotes Kronecker product, and $\Id_m$ an $m\times m$ identity matrix.
	The hessian of $\Phi$ w.r.t. $V$ is $\nabla^2_{\vec(V)}\Phi = (\Id_m\otimes\Psi(U))^T(\Id_m\otimes\Psi(U)).$
	
	Suppose by contradiction that $(U,V)$ is a local maximum.
	Then the Hessian of $\Phi$ is negative semi-definite. 
	As principal submatrices of negative semi-definite matrices are again negative semi-definite,
        the Hessian of $\Phi$ w.r.t $V$ must be also negative semi-definite.
	However, $\Phi$ is convex in $V$ thus its Hessian restricted to $V$ must be positive semi-definite.
	The only matrix which is both p.s.d. and n.s.d. is the zero matrix.
	It follows that $\nabla^2_{\vec(V)}\Phi(U,V)=0$
	and thus $\Psi(U)=0.$
	By Assumption \ref{ass}, there exists $j\in[M]$ s.t. $\sigma_{p_j}$ is strictly positive, 
	thus some entries of $\Psi(U)$ must be strictly positive, 
	and so $\Psi(U)$ cannot be identically zero, leading to a contradiction.
	Therefore $\Phi$ has no local maximum.
\end{enumerate}
\end{proof}

\section{The architecture of CNN13 from Table \ref{tab:mnist}: See Table \ref{tab:mnist_cnn}}
\begin{table*}[ht]
\begin{center}
\def\arraystretch{1.3}
\scriptsize
\begin{tabular}{|c|c|r|}
\hline
\textbf{Layer} & \textbf{Output size} & \textbf{\#neurons} \\
\hline
Input: $28\times28$ & $28\times28\times1$ & \\ 
\hline
$3\times3\mathrm{\,conv}-64, \mathrm{\,\,\,stride\,} 1$ & $28\times28\times64$ & $50176$\\
\hline
$3\times3\mathrm{\,conv}-64, \mathrm{\,\,\,stride\,} 1$ & $28\times28\times64$ & $50176$\\
\hline
$3\times3\mathrm{\,conv}-64, \mathrm{\,\,\,stride\,} 2$ & $14\times14\times64$ & $12544$\\
\hline
$3\times3\mathrm{\,conv}-128, \mathrm{\,stride\,} 1$ & $14\times14\times128$ & $25088$\\
\hline
$3\times3\mathrm{\,conv}-128, \mathrm{\,stride\,} 1$ & $14\times14\times128$ & $25088$\\
\hline
$3\times3\mathrm{\,conv}-128, \mathrm{\,stride\,} 2$ & $7\times7\times128$ & $6272$\\
\hline
$3\times3\mathrm{\,conv}-256, \mathrm{\,stride\,} 1$ & $7\times7\times256$ & $12544$\\
\hline
$1\times1\mathrm{\,conv}-256, \mathrm{\,stride\,} 1$ & $7\times7\times256$ & $12544$\\
\hline
$3\times3\mathrm{\,conv}-256, \mathrm{\,stride\,} 2$ & $4\times4\times256$ & $4096$\\
\hline
$3\times3\mathrm{\,conv}-256, \mathrm{\,stride\,} 1$ & $4\times4\times256$ & $4096$\\
\hline
$3\times3\mathrm{\,conv}-256, \mathrm{\,stride\,} 2$ & $2\times2\times256$ & $1024$\\
\hline
$3\times3\mathrm{\,conv}-256, \mathrm{\,stride\,} 1$ & $2\times2\times256$ & $1024$\\
\hline
$3\times3\mathrm{\,conv}-256, \mathrm{\,stride\,} 2$ & $1\times1\times256$ & $256$\\
\hline
\multicolumn{3}{|c|}{Fully connected, $10$ output units}\\
\hline
\end{tabular}	
\end{center}
\caption{The architecture of CNN13 for MNIST dataset.
There are in total $179,840$ hidden neurons.
}
\label{tab:mnist_cnn}
\end{table*}

\section{Visualization of the loss landscape before and after adding skip-connections to the output layer}\label{sec:visualize}
\begin{figure*}[!ht]
\centering
\begin{subfigure}{.44\linewidth}
  \centering
  \includegraphics[width=\linewidth]{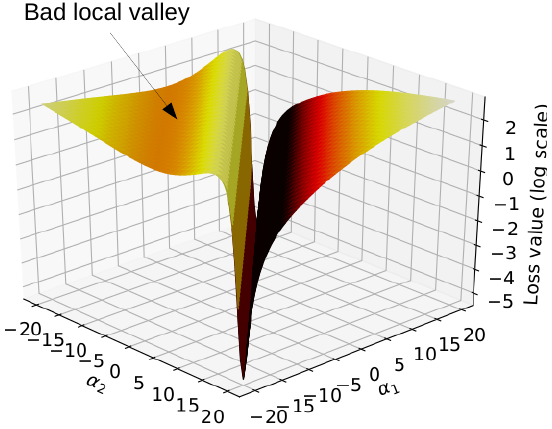}
  \caption{Sigmoid, no skip}
  \label{fig:sigmoid_no_skip}
\end{subfigure}
\begin{subfigure}{.49\linewidth}
  \centering
  \includegraphics[width=\linewidth]{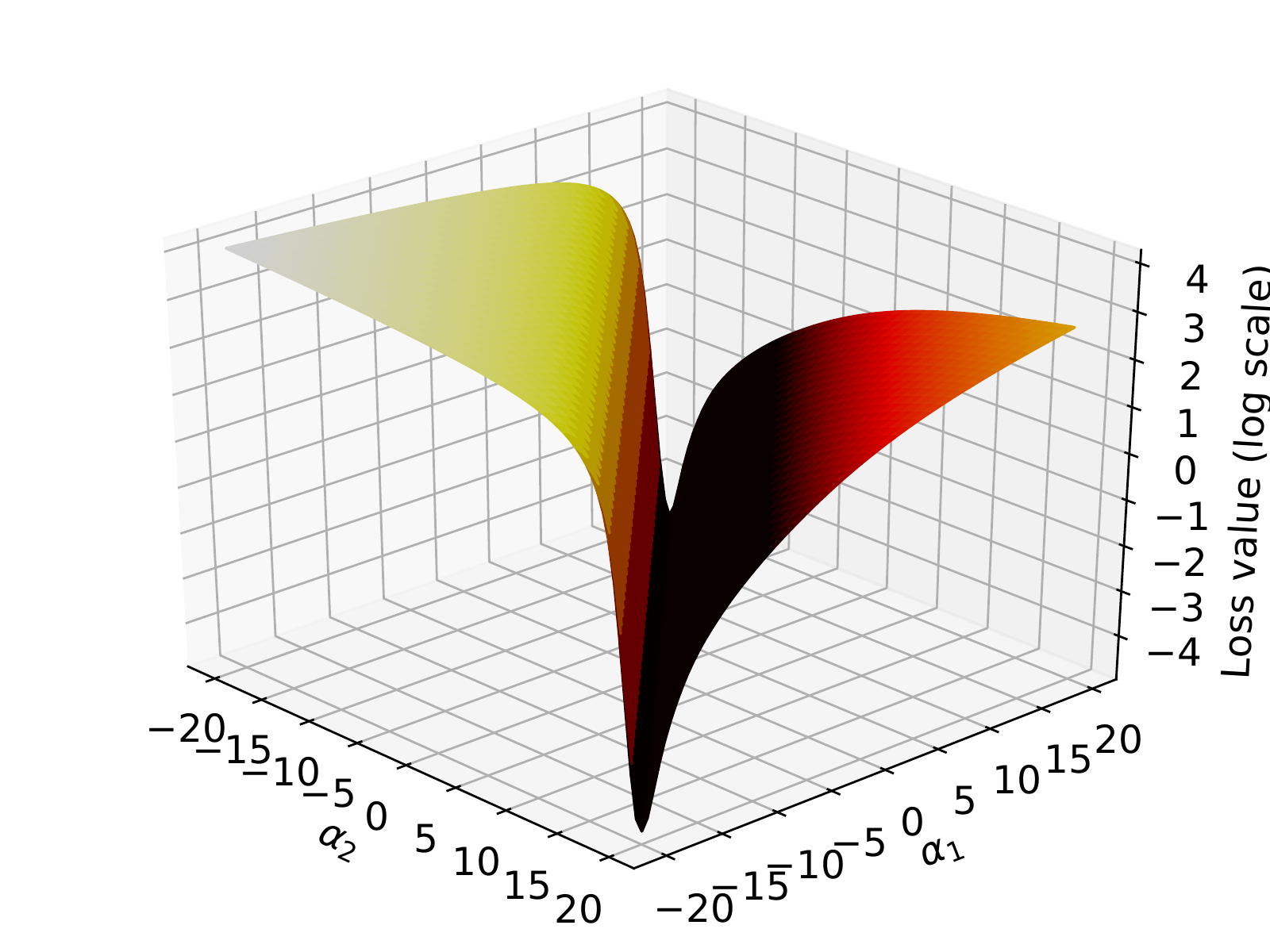}
  \caption{Sigmoid, with skip }
  \label{fig:softplus_with_skip}
\end{subfigure}
\caption{Loss surface of a two-hidden-layer network on a small MNIST dataset. 
}
\begin{subfigure}{.49\linewidth}
  \centering
  \includegraphics[width=\linewidth]{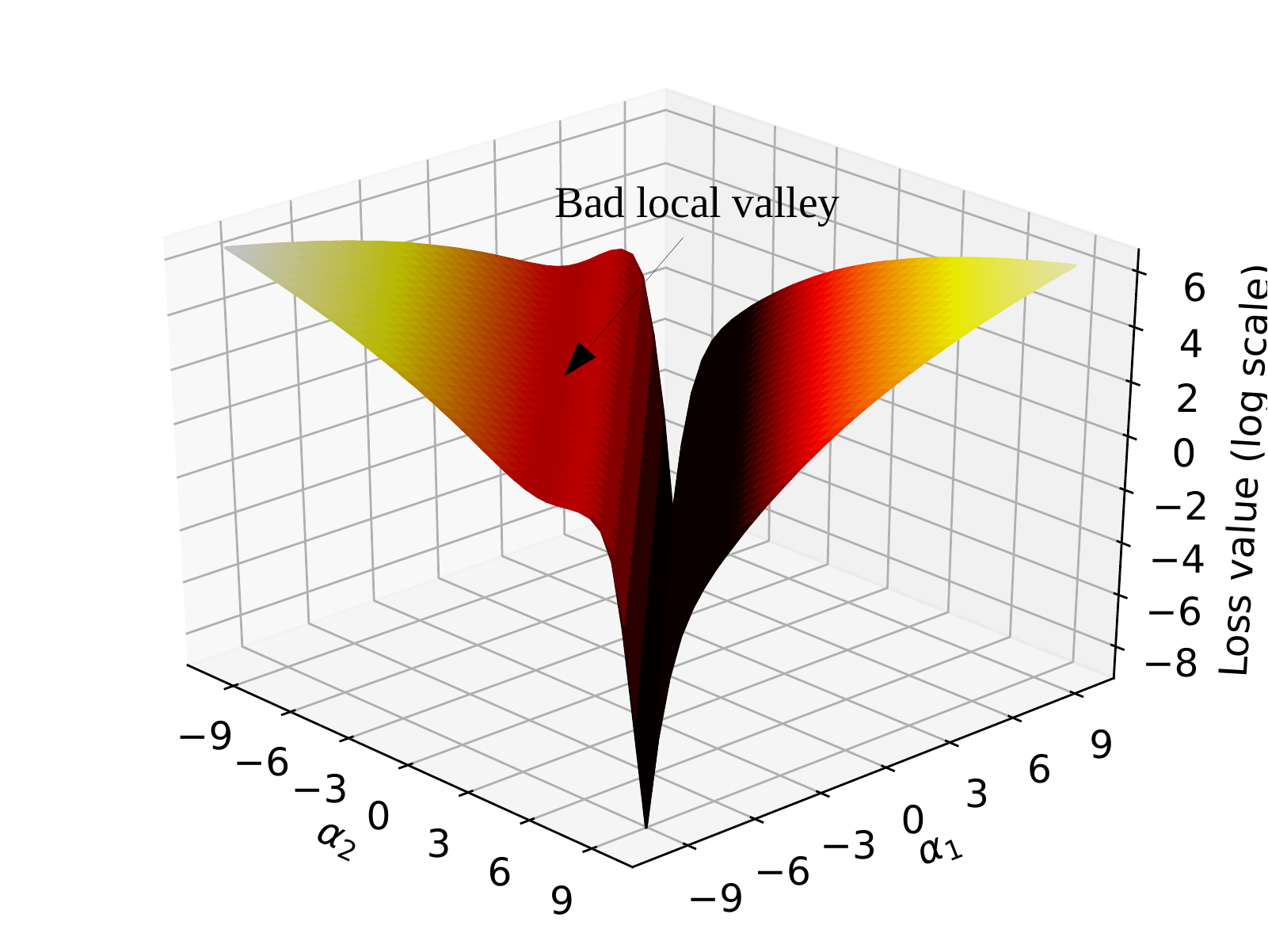}
  \caption{Softplus, no skip}
  \label{fig:softplus_no_skip}
\end{subfigure}%
\begin{subfigure}{.49\linewidth}
  \centering
  \includegraphics[width=\linewidth]{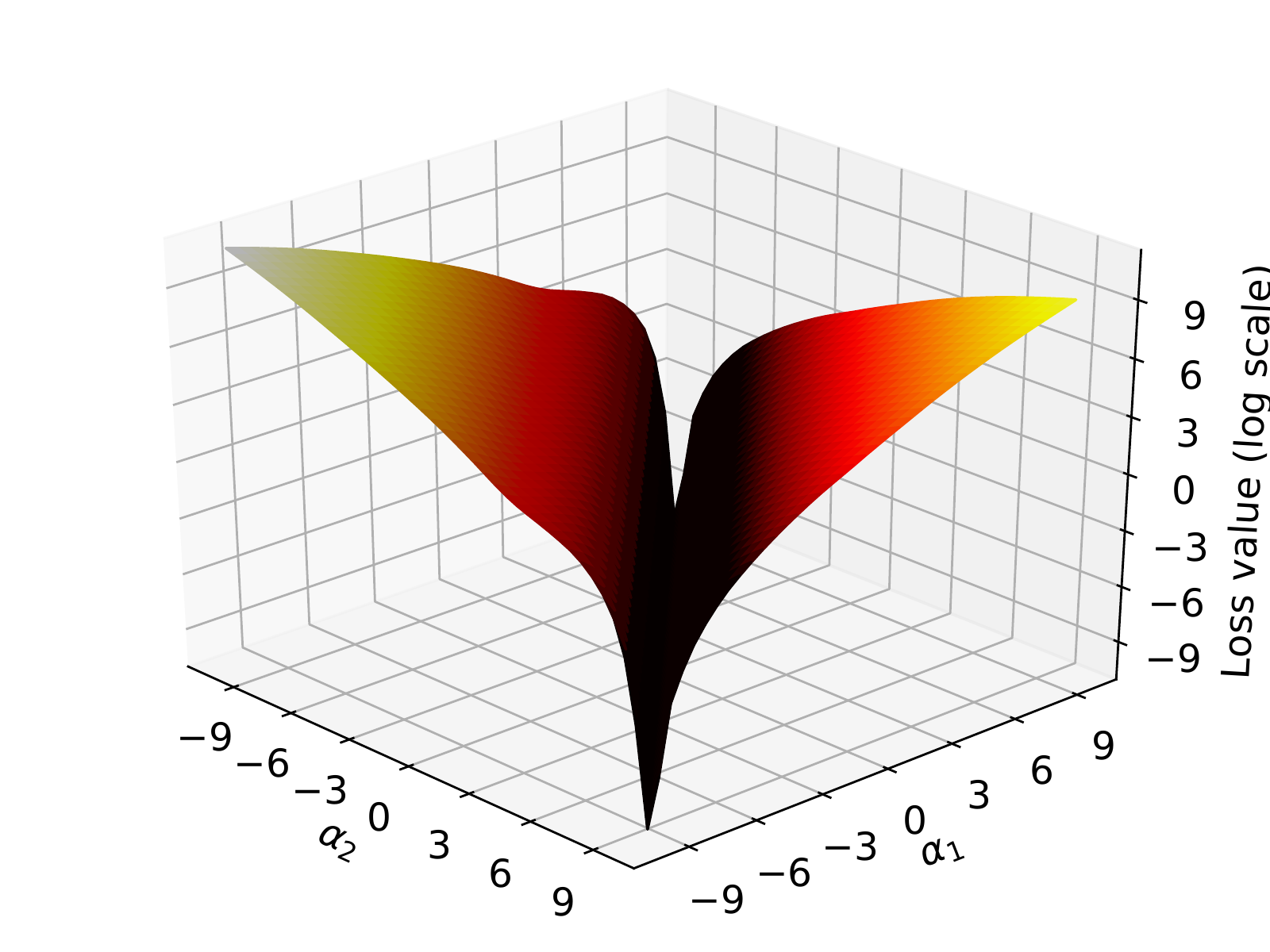}
  \caption{Softplus, with skip}
  \label{fig:softplus_with_skip}
\end{subfigure}
\label{fig:loss_surface}
\end{figure*}
Similar to \cite{Li2018, Goodfellow15}, we visualize the loss surface 
restricted to a two dimensional subspace of the parameter space.
The subspace is chosen to go through some point $(U_0,V_0)$ learned by SGD
and spanned by two random directions $(U_1,V_1)$ and $(U_2,V_2)$.

% Basically, this requires us to evaluate the training loss at potentially many points parameterized by
% $(U_0,V_0) + \alpha_1(U_1,V_1) + \alpha_2(U_2, V_2),$
% which turns out to be computationally intensive for large network architectures like DenseNet and VGG-Nets.

For the purpose of illustration, we train with SGD a two-hidden-layer fully connected network 
with $784$ and $300$ hidden units respectively, followed by a $10$-way softmax.
The training set consists of $1024$ images, which are randomly selected from MNIST dataset.
After adding skip-connections to the output, the network fulfills $M=N=1024.$
Figure \ref{fig:loss_surface} shows the heat map of the loss surface before and after adding skip-connections.
One can see a visible effect that skip-connections 
have helped to smooth the loss landscape near a small sub-optimal region and allows gradient descent to flow directly from there
to the bottom of the landscape with smaller objective value.

\section{Discussion of training error in Table \ref{tab:main}}\label{sec:train_err}
\paragraph{Training error for the experiment in Table \ref{tab:main}.}
As shown in Table \ref{tab:main}, the training error is zero in all cases,
except when the original VGG models are used with sigmoid activation function.
The reason, as noticed in our experiments, is because the learning of these sigmoidal networks converges quickly 
to a constant zero classifier (i.e. the output of the last hidden layer converges to zero), 
which makes both training and test accuracy converge to $10\%$ 
and the loss in Equation \eqref{eq:cross_entropy} converges to $-\log(1/10).$
While we are not aware of a theoretical explanation for this behavior,
it is not restricted to the specific architecture of VGGs but hold in general for plain sigmoidal networks with depth>5
as pointed out earlier by \cite{XavierBengio2010}.
As shown in Table \ref{tab:main}, Densenets however do not suffer from this phenomenon, 
probably because they already have skip-connections between all the hidden layers of a dense block, 
thus gradients can easily flow from the output to every layer of a dense block, 
which makes the training of this network with sigmoid activation function become feasible.
% \begin{table*}[ht]
% \footnotesize
% \begin{center}
% \def\arraystretch{1.0}
% \begin{tabular}{|l|c|c|}
% \hline
% \textbf{Model} & Sigmoid, C-10 & Softplus, C-10  \\
% \hline
% VGG11 & $10$ & $\textbf{100}$ \\
% VGG11-skip (rand) & $\textbf{100}$ & $\textbf{100}$  \\
% VGG11-skip (SGD) & $\textbf{100}$ & $\textbf{100}$ \\
% 
% \hline
% VGG13 & $10$ & $\textbf{100}$ \\
% VGG13-skip (rand) & $\textbf{100}$ & $\textbf{100}$  \\
% VGG13-skip (SGD) & $\textbf{100}$ & $\textbf{100}$ \\
% 
% \hline
% VGG16 & $10$ & $\textbf{100}$ \\
% VGG16-skip (rand) & $\textbf{100}$ & $\textbf{100}$ \\
% VGG16-skip (SGD) & $\textbf{100}$ & $\textbf{100}$ \\
% 
% \hline
% Densenet121 & $\textbf{100}$ & $\textbf{100}$  \\
% Densenet121-skip (rand) & $\textbf{100}$ & $\textbf{100}$  \\
% Densenet121-skip (SGD) & $\textbf{100}$ & $\textbf{100}$ \\
% \hline
% \end{tabular}	
% \end{center}
% \caption{Training accuracy ($\%$) of the networks from Table \ref{tab:main}.}
% \label{tab:training_acc}
% \end{table*}

\paragraph{Discussion of convergence speed.}
For sigmoid activation, we noticed that skip-models when trained with the random sampling approach (skip-rand) 
often converge much slower than when trained with full SGD (skip-SGD).
In our experiments, to be sure that one gets absolute zero training error,
we set the number of training epochs to $5000$ for the former case and $1000$ for the later. 
Perhaps a better learning rate schedule might help to reduce this number, or maybe not, but this is beyond the scope of this paper.
For softplus activation, we noticed a much faster convergence -- all models often converge within $300$ epochs to absolute zero training error.
\vspace{-10pt}
\begin{figure*}[!ht]
\centering
\begin{subfigure}{.40\linewidth}
  \centering
  \includegraphics[width=\linewidth]{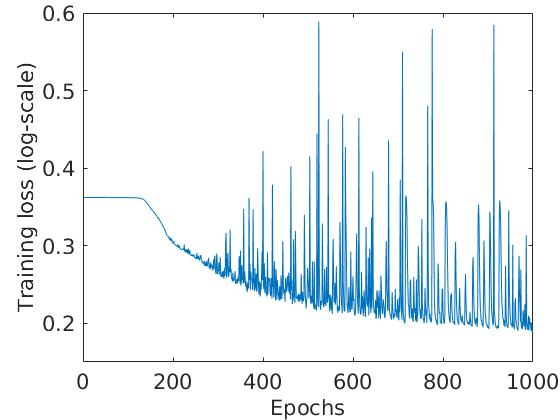}
%   \caption{Softplus, no skip}
\end{subfigure}
\begin{subfigure}{.40\linewidth}
  \centering
  \includegraphics[width=\linewidth]{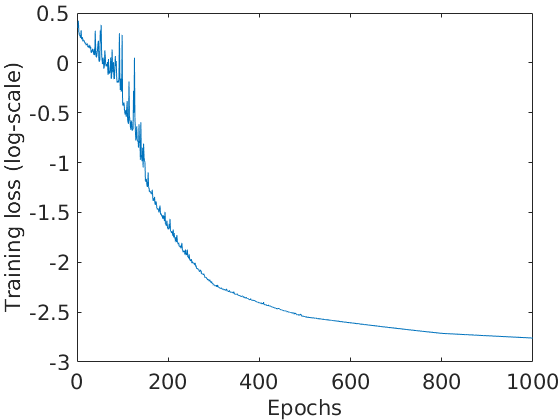}
%   \caption{Softplus, with skip}
\end{subfigure}

\begin{subfigure}{.40\linewidth}
  \centering
  \includegraphics[width=\linewidth]{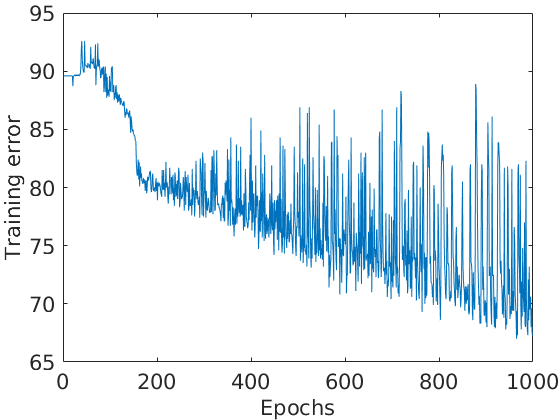}
  \caption{Softplus, no skip}
\end{subfigure}
\begin{subfigure}{.40\linewidth}
  \centering
  \includegraphics[width=\linewidth]{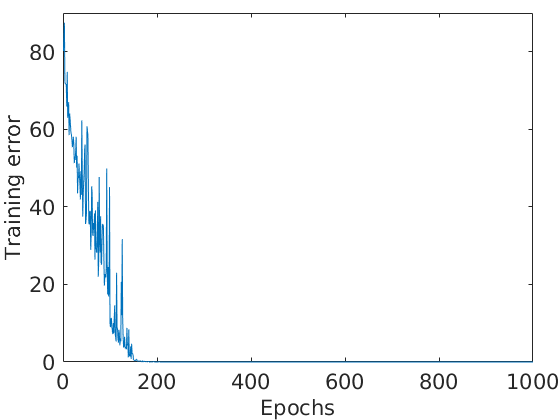}
  \caption{Softplus, with skip}
\end{subfigure}
\caption{Training progress of a $150$-layer neural network with and without skip-connections. 
}
\label{fig:skinny_softplus}
\end{figure*}
\paragraph{Skip-connections are also helpful for training very deep networks with softplus activation.}
Previously we have shown that skip-connections are helpful for training deep sigmoidal networks.
In this part, we show a similar result for softplus activation function.
For the purpose of illustration, we create a small dataset with $N=1000$ training images randomly chosen from CIFAR10 dataset.
We use a very deep network with $150$ fully connected layers, each of width $10$, and softplus activation.
A skip-model is created by adding skip-connections from $N$ randomly chosen neurons to the output units.
We train both networks with SGD.
The best learning rate for each model is empirically chosen from $\Set{10^{-2},10^{-3},10^{-4},10^{-5}}.$
We report the training loss and training error of both models in Figure \ref{fig:skinny_softplus}.
One can see that the skip-network easily converge to zero training error within $200$ epochs,
whereas the original network has stronger fluctuations and fails to converge after $1000$ epochs.
This is directly related to our result of Theorem \ref{th:CE} in the sense that skip-connections can help to smooth the loss landscape 
and enable effective training of very deep networks.

\section{Additional experiments: Max-pooling of VGGs are replaced by 2x2 convolutional layers of stride 2}
The original VGG \cite{VGG} and Densenet \cite{Huang2017} contain pooling layers in their architecture.
In particular, original VGGs have max-pooling layers, and original Densenets have averaging pooling layers.
In the following, we will clarify how/if these pooling layers have been used in our experiments in Table \ref{tab:main},
and whether and how our theretical results are appicable to this case, 
as well as presenting additional experimental results in this regard.

First of all, we note that Densenets \cite{Huang2017} contain pooling layers only after the first dense block.
Meanwhile, as noted in Table \ref{tab:main}, our experiments with Densenets 
only use skip-connections from hidden units of the first dense block,
and thus Lemma \ref{lem:full_rank} and Theorem \ref{th:CE} are applicable.
The reason is that one can restrict the full-rank analysis of matrix $\Psi$ in Lemma \ref{lem:full_rank} 
to the hidden units of the first dense block, 
so that it follows that the set of parameters of the first dense block 
where $\Psi$ has not full rank has Lebesgue measure zero, 
from which our results of Theorem \ref{th:CE} follow immediately.

However for VGGs in Table \ref{tab:main}, we kept their max-pooling layers similar to the original architecture
as we wanted to have a fair comparison between our skip-models and the original models.
In this setting, our results are not directly applicable 
because we lose the analytic property of the entries of $\Psi$ w.r.t. its dependent parameters,
which is crucial to prove Lemma \ref{lem:full_rank}.
Therefore in this section, we would like to present additional results to Table \ref{tab:main}
in which we replace all max-pooling layers of all VGG models from Table \ref{tab:main} with 2x2 convolutional layers of stride 2.
% We perform this change for all VGG models from Table \ref{tab:main}.
In this case, the whole network consists of only convolutional and fully connected layers, hence our theoretical results are applicable.

\begin{table*}[ht]
\footnotesize
\begin{center}
\def\arraystretch{1.3}
\begin{tabular}{|l|c|c|c|c|}
\hline
&\multicolumn{2}{c|}{\textbf{Sigmoid activation function}} & \multicolumn{2}{c|}{\textbf{Softplus activation function}} \\
\hline
\hline
\textbf{Model} & C-10 & C-10${}^+$ & C-10 & C-10${}^+$ \\
\hline
\hline
\rowcolor{Gray}VGG11-mp2conv & $10$ & $10$ & $74.53$ & $88.80$  \\
\rowcolor{Gray}VGG11-mp2conv-skip (rand) & $53.65\pm 0.66$ & - & $55.51\pm 0.43$  & -  \\
\rowcolor{Gray}VGG11-mp2conv-skip (SGD) & $\textbf{64.45}\pm 0.31$ & $\textbf{80.15}\pm 0.59$ & $\textbf{76.18}\pm 0.58$ & $\textbf{89.93}\pm 0.19$ \\

\hhline{|=|=|=|=|=|}
\rowcolor{Gray}VGG13-mp2conv & $10$ & $10$ & $74.04$ & $90.37$  \\
\rowcolor{Gray}VGG13-mp2conv-skip (rand) & $53.45\pm 0.23$ & - & $53.33\pm 0.67$ & -  \\
\rowcolor{Gray}VGG13-mp2conv-skip (SGD) & $\textbf{63.53}\pm 0.37$ & $\textbf{82.40}\pm 0.23$ & $\textbf{75.58}\pm 0.77$ & $\textbf{91.04} \pm 0.20$ \\

\hhline{|=|=|=|=|=|}
\rowcolor{Gray}VGG16-mp2conv & $10$ & $10$ & $74.00$ & $90.38$  \\
\rowcolor{Gray}VGG16-mp2conv-skip (rand) & $53.83\pm 0.30$ & - & $55.34\pm 0.64$ & -  \\
\rowcolor{Gray}VGG16-mp2conv-skip (SGD) & $\textbf{65.77}\pm 0.67$ & $\textbf{83.06}\pm 0.34$ & $\textbf{76.52}\pm 0.78$ & $\textbf{91.00}\pm 0.23$ \\
\hhline{|=|=|=|=|=|}
\end{tabular}	
\end{center}
\caption{Test accuracy ($\%$) of VGG networks from Table \ref{tab:main} 
where max-pooling layers are replaced by 2x2 convolutional layers of stride 2 (denoted as mp2conv).
Other notations are similar to Table \ref{tab:main}.
% In particular, for each model A, A-skip denotes the corresponding skip-model in which
% a subset of $N$ hidden neurons ``randomly selected''  from the hidden layers are connected to the output units.
% The names in open brackets (rand/SGD) specify how the networks are trained: \textbf{rand} ($U$ is randomized and fixed while $V$ is learned with SGD), 
% \textbf{SGD} (both $U$ and $V$ are optimized with SGD).
}
\label{tab:vgg_mp2conv}
\end{table*}

The experimental results are presented in Table \ref{tab:vgg_mp2conv}.
Overall, our main observations are similar as before.
The performance gap between original models and their corresponding skip-variants are approximately the same 
as in Table \ref{tab:main} or slightly more pronounced in some cases.
A one-to-one comparison with Table \ref{tab:main} also shows that 
the performance of skip-models themselves have decreased by $4-7\%$ after the replacement of max-pooling layers
with 2x2 convolutional layers.
This is perhaps not so surprising because the problem gets potentially harder 
when the network has more layers to be learned, especially in case of sigmoid activation where the decrease is sharper.
Similar to Table \ref{tab:main}, adding skip-connections to the output units still prove to be very helpful -- 
it improves the result for softplus while making the training of deep networks with sigmoid activation become possible at all.
Finally, the training of full network with SGD still yields significantly better solutions in terms of generalization error 
than the random feature approach.
This confirms once again the implicit bias of SGD towards high quality solutions
among infinitely many solutions with zero training error.

\section{Data-augmentation results for Table \ref{tab:main}}
The following Table \ref{tab:main_aug} shows additional results to Table \ref{tab:main}
where data-augmentation is used now.
For data-augmentation, we follow the procedure as described in \citep{Zagoruyko16} 
by considering random crops of size $32\times 32$ after $4$ pixel padding on each side of the training images
and random horizontal flips with probability $0.5$. 
For the convenience of the reader, we also repeat the results of Table \ref{tab:main} in the new table \ref{tab:main_aug}.
\begin{table*}[ht]
\footnotesize
\begin{center}
\def\arraystretch{1.3}
\begin{tabular}{||l|c|c|c|c|}
\hline
&\multicolumn{2}{c|}{\textbf{Sigmoid activation function}} & \multicolumn{2}{c|}{\textbf{Softplus activation function}} \\
\hline
\hline
\textbf{Model} & C-10 & C-10${}^+$ & C-10 & C-10${}^+$ \\
\hline
\hline
\rowcolor{Gray}VGG11 & $10$ & $10$ & $78.92$ & $88.62$  \\
\rowcolor{Gray}VGG11-skip (rand) & $62.81\pm 0.39$ & - & $64.49\pm 0.38$  & -  \\
\rowcolor{Gray}VGG11-skip (SGD) & $\textbf{72.51}\pm 0.35$ & $\textbf{85.55}\pm 0.09$ & $\textbf{80.57}\pm 0.40$ & $\textbf{89.32}\pm 0.16$ \\

\hhline{|=|=|=|=|=|}
\rowcolor{Gray}VGG13 & $10$ & $10$ & $80.84$ & $90.58$  \\
\rowcolor{Gray}VGG13-skip (rand) & $61.50\pm 0.34$ & - & $61.42\pm 0.40$ & -  \\
\rowcolor{Gray}VGG13-skip (SGD) & $\textbf{70.24}\pm 0.39$ & $\textbf{86.48}\pm 0.32$ & $\textbf{81.94} \pm 0.40$ & $\textbf{91.06} \pm 0.12$ \\

\hhline{|=|=|=|=|=|}
\rowcolor{Gray}VGG16 & $10$ & $10$ & $81.33$ & $90.68$  \\
\rowcolor{Gray}VGG16-skip (rand) & $61.57\pm 0.41$ & - & $61.46\pm 0.34$ & -  \\
\rowcolor{Gray}VGG16-skip (SGD) & $\textbf{70.61}\pm 0.36$ & $\textbf{86.42}\pm 0.31$ & $\textbf{81.91}\pm 0.24$ & $\textbf{91.00}\pm 0.22$ \\

\hhline{|=|=|=|=|=|}
Densenet121 & $\textbf{86.41}$ & $\textbf{90.93}$ & $\textbf{89.31}$ & $\textbf{94.20}$ \\
Densenet121-skip (rand) & $52.07\pm 0.48$ & - & $55.39\pm 0.48$ & -  \\
Densenet121-skip (SGD) & $81.47\pm 1.03$ & $90.32\pm 0.50$ & $86.76\pm 0.49$ & $93.23\pm 0.42$ \\
\hline
\end{tabular}	
\end{center}
\caption{Test accuracy ($\%$) of several CNN architectures with/without skip-connections on CIFAR10
(${}^+$ denotes data augmentation).
For each model A, A-skip denotes the corresponding skip-model in which
a subset of $N$ hidden neurons ``randomly selected''  from the hidden layers are connected to the output units.
For Densenet121, these neurons are randomly chosen from the first dense block.
The names in open brackets (rand/SGD) specify how the networks are trained: \textbf{rand} ($U$ is randomized and fixed while $V$ is learned with SGD), 
\textbf{SGD} (both $U$ and $V$ are optimized with SGD).
}
\label{tab:main_aug}
\end{table*}

\end{document}